\newtheorem{lemma}{Lemma}
\newtheorem{theorem}[lemma]{Theorem}
\newtheorem{claim}[lemma]{Claim}
\newtheorem{defn}[lemma]{Definition}
\newcommand{\e}{\bm{e}}
\newcommand{\Tr}{\operatorname{Tr}}
\newcommand{\mvec}{\mathrm{vec}}
\newcommand{\note}[1]{\marginpar{\tiny *note in TeX*}}
\newcommand{\ignore}[1]{}
\newcommand{\calN}{{\cal N}}
\renewcommand{\phi}{\varphi}
\newcommand{\R}{\mathbb{R}}
\newcommand{\wt}[1]{\widetilde{#1}}
\newcommand{\tp}[1]{{#1}^{T}}
\newcommand{\wh}[1]{\widehat{#1}}
\newcommand{\dist}{\mathrm{dist}}
\newcommand{\twonorm}[1]{\left\| {#1} \right\|_2}
\DeclareMathOperator*{\argmin}{argmin}
\newcommand{\calA}{\mathcal{A}}
\newcommand{\E}{\mathbb{E}}
\renewcommand{\u}{\bm{u}}
\renewcommand{\v}{\bm{v}}
\newcommand{\Vh}{\widehat{V}}
\newcommand{\vhn}{\widehat{\v}_{h+1}}
\newcommand{\Ut}{U_{t}}
\newcommand{\Vo}{V_*}
\newcommand{\Uot}{\tp{U}_*}
\newcommand{\Vot}{\tp{V}_*}
\newcommand{\So}{\Sigma_*}
\newcommand{\Uo}{U_*}
\newcommand{\so}{\sigma_*}
\newcommand{\xti}{\tp{\x}_i}
\newcommand{\yti}{\tp{\y}_i}
\newcommand{\ytj}{\tp{\y}_j}
\newcommand{\vh}{\wh{\v}}
\newcommand{\ut}{\u_h}
\newcommand{\utt}{\tp{\u}_{h}}
\newcommand{\utnt}{\tp{\u}_{h+1}}
\newcommand{\vtnt}{\tp{\v}_{h+1}}
\newcommand{\hvtn}{\vh_{h+1}}
\newcommand{\hvtnt}{\tp{\vh}_{h+1}}
\newcommand{\Bo}{\widetilde{B}}
\newcommand{\uo}{\u_* }
\newcommand{\uot}{\tp{\u}_* }
\newcommand{\vo}{\v_* }
\newcommand{\vot}{\tp{\v}_* }
\newcommand{\ip}[2]{\langle #1, #2 \rangle}
\newcommand{\Wo}{W_*}
\newcommand{\Wot}{\tp{W}_*}
\newcommand{\z}{\bm{z}}
\newcommand{\x}{\bm{x}}
\newcommand{\y}{\bm{y}}
\renewcommand{\b}{\bm{b}}
\newcommand{\xit}{\tp{\x}_i}
\newcommand{\eit}{\tp{\e}_i}
\newcommand{\etj}{\tp{\e}_j}
\newcommand{\ejt}{\tp{\e}_j}
\newcommand{\ej}{\e_j}
\newcommand{\yi}{\y_i}
\newcommand{\yj}{\y_j}
\newcommand{\yit}{\tp{\y}_i}
\newcommand{\Vop}{\Vo^{\perp}}
\title{Provable Inductive Matrix Completion}
\author{Prateek Jain\\
{Microsoft Research India, Bangalore}\\
{prajain@microsoft.com}
\and
Inderjit S. Dhillon\\
{The University of Texas at Austin}\\
{inderjit@cs.utexas.edu}
}
\date{}
\begin{document}

\maketitle

\begin{abstract}
Consider a movie recommendation system where apart from the ratings information,  side information such as user's age or movie's genre is also available. Unlike  standard matrix completion, 
in this setting one should be able to predict inductively on new users/movies.  In this paper, we study the problem of inductive matrix completion in the exact recovery setting. That is, we assume that the ratings matrix is generated by applying feature vectors to a low-rank matrix and the goal is to recover back the underlying matrix. Furthermore, we generalize the problem to that of low-rank matrix estimation using rank-$1$ measurements. We study this generic problem and provide conditions that the set of measurements should satisfy so that the alternating minimization method (which otherwise is a non-convex method with no convergence guarantees) is able to recover back the {\em exact} underlying low-rank matrix. 

In addition to inductive matrix completion, we show that two other low-rank estimation problems can be studied in our framework: a) general low-rank matrix sensing using rank-$1$ measurements, and b) multi-label regression with missing labels. For both the problems, we provide novel and interesting bounds on the number of measurements required by alternating minimization to provably converges to the {\em exact} low-rank matrix. 
In particular, our analysis for the general low rank matrix sensing problem significantly improves the required storage and computational cost  than that required by the RIP-based matrix sensing methods \cite{RechtFP2007}. Finally, we provide empirical validation of our approach and demonstrate that  alternating minimization is able to recover the true matrix for the above mentioned problems using a small number of measurements.
\end{abstract}

\newcommand{\lrrom}{LRROM }
\section{Introduction}\label{sec:intro}
Motivated by the Netflix Challenge, recent research has addressed the problem of matrix completion where the goal is to recover the underlying low-rank ``ratings'' matrix by using a small number of observed entries of the matrix. However, the standard low-rank matrix completion formulation is applicable only to the transductive setting only, i.e., predictions are restricted to the existing users/movies only. However, several real-world recommendation systems have useful side-information available in the form of feature vectors for users as well as movies, and hence one should be able to make accurate predictions for new users and movies as well.

In this paper, we formulate and study the above mentioned problem which we call inductive matrix  completion, where other than a small number of observations from the ratings matrix, the feature vectors for users/movies are also available. We formulate the problem as that of recovering a low-rank  matrix $\Wo$ using observed entries $R_{ij}=\x_i^T \Wo \y_j$ and the user/movie feature vectors $\x_i$, $\y_j$. By factoring $\Wo=\Uo\Vot$, we see that this scheme constitutes a bi-linear prediction $(\x^T\Uo)(\Vot\y)$ for a new user/movie pair $(\x,\y)$. 

In fact, the above rank-$1$ measurement scheme also arises in several other important low-rank estimation problems such as: a) general low-rank matrix sensing in the signal acquisition domain, and b) multi-label regression problem with missing information. 

In this paper, we generalize the above three mentioned problems to the following low-rank matrix estimation problem that we call {\em Low-Rank matrix estimation using Rank One Measurements} ({\bf \lrrom}): recover the rank-$k$ matrix $\Wo \in \mathbb{R}^{d_1\times d_2}$ by using rank-$1$ measurements of the form: $$\b=[\x_1^T\Wo \y_1\ \ \x_2^T \Wo \y_2\ \ \dots\ \ \x_m^T \Wo\y_m]^T,$$
where $\x_i, \y_i$ are ``feature'' vectors and are provided along with the measurements $\b$.  

Now given measurements $\b$ and the feature vectors $\{\x_1\ \x_2\ \dots\ \x_m\}$, $Y=\{\y_1\ \y_2\ \dots\ \y_m\}$, a canonical way to recover $\Wo$ is to find a rank-$k$ matrix $W$ such that $\|\mathcal{A}(W)-\b\|_2$ is small. While the objective function of this problem is  simple least squares, the non-convex rank constraint makes it NP-hard, in general, to solve. In existing literature, there are two common approaches to handle such low-rank problems: a) Use trace-norm constraint as a proxy for the rank constraint and then solve the resulting non-smooth convex optimization problem, b) Parameterize $W$ as $W=UV^T$ and then alternatingly optimize for $U$ and $V$. 

The first approach has been shown to be successful for a variety of problems such as matrix completion \cite{CandesR2007, CandesT2009, Gross2009, KeshavanOM2009}, general low-rank matrix sensing \cite{RechtFP2007}, robust PCA \cite{CandesLMW11,ChandrasekaranSPW2009}, etc. However, the resulting convex optimization methods require computation of full SVD of matrices with potentially large rank  and hence do not scale to large scale problems. On the other hand, alternating minimization and its variants need to solve only least squares problems and hence are scalable in practice but might get stuck in a local minima. 
 However, \cite{JainNS13} recently showed that under standard set of assumptions, alternating minimization actually converges at a linear rate to the global optimum of two low-rank estimation problems: a) RIP measurements based general low-rank matrix sensing, and b) low-rank matrix completion. 

Motivated by its empirical as well as theoretical success, we  study a variant of alternating minimization  (with appropriate initialization) for the  above mentioned \lrrom problem. To analyze our general \lrrom problem, we present three key properties that a rank-$1$ measurement operator should satisfy. Assuming these properties, we show that the alternating minimization method converges to the global optima of \lrrom at a linear rate. We then study the  three problems individually and show that for each of the problems, the measurement operator indeed satisfies the conditions required by our general analysis and hence, for each of the problems alternating minimization converges to the global optimum at a linear rate. Below, we briefly describe the three application problems that we study and also our high-level result for each one of them: \\[8pt]
 {\bf (a) Efficient matrix sensing using Gaussian Measurements}: In this problem, $\x_i\in \mathbb{R}^{d_1}$ and $\y_i\in \mathbb{R}^{d_2}$ are sampled from a sub-Gaussian distribution and the goal is efficient acquisition and recovery of rank-$k$ matrix $\Wo$. Here, we show that if the number of measurements $m=\Omega(k^4 \beta^2 (d_1+d_2) \log(d_1+d_2))$, where $\beta=\so^1/\so^k$ is the condition number of $\Wo$. Then with high probability (w.h.p.), our alternating minimization based method will recover back $\Wo$ in linear time. 

Note that the problem of low-rank matrix sensing has been considered by several existing methods \cite{RechtFP2007, JainMD10, LeeBre09}, however most of these methods require the measurement operator to satisfy the Restricted Isometry Property (RIP) (see Definition~\ref{defn:rip}). Typically, RIP operators are constructed by sampling from distributions with bounded fourth moments and require $m=O(k(d_1+d_2)\log(d_1+d_2))$ measurements to satisfy RIP for a constant $\delta>0$. That is, the number of samples required to satisfy RIP are similar to the number of samples required by our method. 

Moreover, RIP based operators are typically dense, have a large memory footprint and make the algorithm computationally intensive. For example, assuming rank and $\beta$ to be constant, RIP based operators would require $O((d_1+d_2)d_1d_2))$ storage and computational time, as opposed to $O((d_1+d_2)^2)$ storage and computational time required by the rank-$1$ measurement operators. 
 However, a drawback of such rank-$1$ measurements is that, unlike RIP based operators, they are not universal, i.e., a new set of $\x_i, \y_i$ needs to be sampled for any given signal $\Wo$. \\[8pt]
{\bf (b) Inductive Matrix Completion}: As motivated earlier, consider a movie recommendation system with $n_1$ users and $n_2$ movies. Let $X\in \mathbb{R}^{n_1\times d_1},\ Y\in \mathbb{R}^{n_2\times d_2}$ be feature matrices of the users and the movies, respectively. Then, the user-movie rating $R_{ij}$ can be modeled as $R_{ij}=\tp{\x}_iW\y_j$ and the goal is to learn $W$ using a small number of random ratings indexed by the set of observations $\Omega \in [n_1]\times [n_2]$. Note that matrix completion is a special case of this problem when $\x_i=\e_i$ and $\y_j=\e_j$. Also, unlike standard matrix completion, accurate ratings can be predicted for users who have not rated any prior movies and vice versa. 

If the feature matrices $X, Y$ are incoherent and  the number of observed entries $|\Omega|=m\geq C\cdot (k^3 \beta^2 (d_1\cdot d_2)\log(d_1+d_2)$, then inductive matrix completion satisfies  the conditions required by our generic method and hence the global optimality result follows directly. Note that our analysis requires a quadratic number of samples, i.e., $\widetilde{O}(d_1\cdot d_2)$ samples (assuming $k$ to be a constant) for recovery. On the other hand, applying standard matrix completion would require $\widetilde{O}(n_1+n_2)$ samples. Hence, our analysis provides significant improvement if $d_1\cdot d_2\ll n_1+n_2$, i.e., when the number of features is significantly smaller than the total number of users and movies. \\[8pt]
{\bf (c) Multi-label Regression with Missing Data}: Consider a multi-variate regression problem, where the goal is to predict a set of (correlated) target variables $\bm{r}\in \R^{L}$ for a given $\x\in \R^{d_1}$.  We model this problem as a regression problem with low-rank parameters, i.e., $\bm{r}=\tp{W}\x$ where $W$ is a low-rank matrix. Given training data points $X=[\x_1\  \x_2\ \dots\ \x_{n_1}]$ and the associated target matrix $R$, $W$ can be learned using a simple least squares regression. However, in most real-world applications several of the entries in $R$ are missing and the goal is to be able to learn $W$ ``exactly''. 

Now, let the set of known entries $R_{ij}, (i,j)\in \Omega$ be sampled uniformly at random from $R$. Then we show that, by sampling $|\Omega|=m\geq k^3 \beta^2 \cdot (d_1\cdot L)\cdot \log(d_1+L)$ entries, alternating minimization recovers back $\Wo$ exactly. Note that a direct approach to this problem is to first recover the label matrix $R$ using standard matrix completion and then learn $\Wo$ from the completed label matrix. Such a method would require $\tilde{O}(n_1+L)$ samples of $R$. In contrast, our more unified approach requires $\tilde{O}(d_1\cdot L)$ samples. Hence, if the number of training points $n_1$ is much larger than the number of labels $L$, then our method  provides significant improvement over first completing the matrix and then learning the true low-rank matrix. 

We would like to stress that the above mentioned problems of inductive matrix completion and multi-label regression with missing labels have recently received a lot of attention from the machine learning community \cite{AbernethyBEV09, AgrawalGPV13}. However, to the best of our knowledge, our results are the first theoretically rigorous results that improve upon the sample complexity of first completing the target/ratings matrix and then learning the parameter matrix $\Wo$. 


{\bf Related Work}: Low-rank matrix estimation problems are pervasive and have innumerable real-life applications. Popular examples of low-rank matrix estimation problems include PCA, robust PCA, non-negative matrix approximation, low-rank matrix completion, low-rank matrix sensing etc. While in general low-rank matrix estimation that satisfies given (affine) observations is NP-hard, several recent results present  conditions under which the optimal solution can be recovered exactly or approximately \cite{CandesR2007, RechtFP2007, CandesT2009, AgarwalNW11, ChandrasekaranSPW2009, CandesLMW11, JainNS13, JainMD10}. 

Of these above mentioned low-rank matrix estimation problems, the most relevant problems to ours are those of matrix completion \cite{CandesR2007, KeshavanOM2009,JainNS13} and general matrix sensing \cite{RechtFP2007, JainMD10, LeeBre09}. The matrix completion problem is restricted to a given set of users and movies and hence does not generalize to new users/movies. On the other hand, matrix sensing methods require the measurement operator to satisfy the RIP condition, which at least for the current constructions, necessitate measurement matrices that have full rank, large number of random bits and hence high storage as well as computational time \cite{RechtFP2007}. Our work on general low-rank matrix estimation (problem (a) above) alleviates this issue as our measurements are only rank-$1$ and hence the low-rank signal $\Wo$ can be  encoded as well as decoded much more efficiently. Moreover, our result for inductive matrix completion generalizes the matrix completion work and provides,  to the best of our knowledge, the first theoretical results for the problem of inductive matrix completion. 


{\bf Paper Organization}: We formally introduce the problem of low-rank matrix estimation with rank-one measurements  in Section~\ref{sec:formulation}. We  provide our version of the alternating minimization method and then we present a {\em generic} analysis for  alternating minimization  when applied to such rank-one measurements based problems. Our results distill out certain key problem specific properties that would imply global optimality of alternating minimization. In the subsequent sections~\ref{sec:sense}, \ref{sec:imc}, and \ref{sec:mult}, we show that for each of our three problems (mentioned above) the required problem specific properties are satisfied and hence our alternating minimization method provides globally optimal solution. Finally, we provide empirical validation of our methods in Section~\ref{sec:exps}.

\section{Low-rank Matrix Estimation using Rank-one Measurements}\label{sec:formulation}
\begin{algorithm}[t]
\caption{AltMin-\lrrom: Alternating Minimization for \lrrom}
\label{alg:altmin}
  \begin{algorithmic}[1]
    \STATE {\bf Input}: Measurements: $\b_{all}$, Measurement matrices: $\calA_{all}$, Number of iterations: $H$
    \STATE Divide $(\calA_{all}, \b_{all})$ into $2H+1$ sets (each of size $m$) with $h$-th set being $\calA^h=\{A_1^h,A_2^h,\dots,A_m^h\}$ and $\b^h=[b_1^h\ b_2^h\ \dots\ b_m^h]^T$
    \STATE {\bf Initialization}: $U_0=$top-$k$ left singular vectors of $\frac{1}{m}\sum_{i=1}^m b_i^0 A_i^0$
    \FOR{$h=0$ to $H-1$}
    \STATE $b\leftarrow b^{2h+1}, \calA\leftarrow \calA^{2h+1}$
    \STATE $\widehat{V}_{h+1}\leftarrow \argmin_{V\in \R^{d_2\times k}} \sum_i (b_i-\x_i^TU_h\tp{V}\y_i)^2$
    \STATE $V_{h+1}=QR(\widehat{V}_{h+1})$ //orthonormalization of $\widehat{V}_{h+1}$
    \STATE $b\leftarrow b^{2h+2}, \calA\leftarrow \calA^{2h+2}$
    \STATE $\widehat{U}_{h+1}\leftarrow \argmin_{U\in \mathbb{R}^{d_1\times k}} \  \sum_i (b_i-\x_i^TU\tp{V}_{h+1}\y_i)^2$
    \STATE $U_{h+1}=QR(\widehat{U}_{h+1})$ //orthonormalization of $\widehat{U}_{h+1}$

    \ENDFOR
    \STATE {\bf Output}: $W_H=U_H(\widehat{V}_H)^T$
  \end{algorithmic}
\end{algorithm}
Let $\calA: \mathbb{R}^{d_1\times d_2}\rightarrow \R^m$ be a linear measurement operator parameterized by $\calA=\{A_1, A_2, \dots, A_m\}$, where $A_i\in \mathbb{R}^{d_1\times d_2}$. Then, the linear measurements of a given matrix $W\in \R^{d_1\times d_2}$ are given by: 
\begin{equation}
  \label{eq:linA}
  \calA(W)=[\Tr(A_1^TW)\ \Tr(A_2^TW)\ \dots\ \Tr(A_m^TW)]^T,
\end{equation}
where $\Tr$ denotes the trace operator. 

In this paper, we mainly focus on the rank-$1$ measurement operators, i.e., $A_i=\x_i\tp{\y}_i, 1\leq i\leq m$ where $\x_i \in \R^{d_1}, \y\in \R^{d_2}$. Also, let $\Wo \in \R^{d_1\times d_2}$ be  a rank-$k$ matrix, with the singular value decomposition (SVD) $\Wo=\Uo\So\Vot$.

Then, given $\calA, \b$, the goal of the \lrrom problem is to recover back $\Wo$ efficiently. This problem can be reformulated as the following non-convex optimization problem:  
\begin{equation}
 \label{eq:r1ms}
 {\bf (\lrrom)}:\ \ \ \min_{W=U\tp{V}, U\in \R^{d_1\times k}, V\in \R^{d_2\times k}} \sum_{i=1}^m (b_i -\tp{\x}_i W \y_i)^2.
\end{equation}
Note that $W$ to be recovered is restricted to have at most rank-$k$ and hence $W$ can be re-written as $W=U\tp{V}$. 

We use the standard alternating minimization algorithm with appropriate initialization to solve the above problem \eqref{eq:r1ms} (see Algorithm~\ref{alg:altmin}). Note that the above problem is non-convex in $U, V$ and hence standard analysis would only ensure convergence to a local minima. However, \cite{JainNS13} recently showed that the alternating minimization method in fact converges to the global minima of two low-rank estimation problems: matrix sensing with RIP matrices and matrix completion. 

The rank-one operator given above does not satisfy RIP (see Definition~\ref{defn:rip}), even when the vectors $\x_i, \y_i$ are sampled from the normal distribution (see Claim~\ref{claim:notrip}). Furthermore, each measurement need not reveal exactly one entry of $\Wo$ as in the case of matrix completion. Hence, the proof of \cite{JainNS13} does not apply directly. However, inspired by the proof of \cite{JainNS13}, we distill out three key properties that the operator should satisfy, so that alternating minimization would  converge to the global optimum. 
\begin{theorem}\label{thm:general}
Let $\Wo =\Uo\So\Vot\in \mathbb{R}^{d_1\times d_2}$ be a rank-$k$ matrix with $k$-singular values $\so^1\geq \so^2\dots\geq \so^k$. 
Also, let $\mathcal{A}: \R^{d_1\times d_2}\rightarrow \R^m$ be a linear measurement operator parameterized by $m$ matrices, i.e., $\mathcal{A}=\{A_1, A_2, \dots, A_m\}$ where $A_i=\x_i\y_i^T$. Let $\mathcal{A}(W)$ be as given by \eqref{eq:linA}. 

Now, let $\mathcal{A}$ satisfy the following properties with parameter $\delta=\frac{1}{k^{3/2} \cdot \beta \cdot 100}\ \ $ ($\beta=\so^1/\so^k$): 
  \begin{enumerate}
  \item {\bf Initialization}: $\|\frac{1}{m}\sum_i b_i A_i-\Wo\|_2\leq \|\Wo\|_2\cdot \delta$. 
  \item {\bf Concentration of operators $B_x, B_y$}: Let $B_x=\frac{1}{m}\sum_{i=1}^m (\y_i^T\v)^2\x_i\tp{\x}_i$\\ and $B_y=\frac{1}{m}\sum_{i=1}^m (\tp{\x}_i\u)^2\y_i\tp{\y}_i$, where  $\u\in \mathbb{R}^{d_1}, \v\in \mathbb{R}^{d_2}$ are two unit vectors that are {\em independent} of randomness in $\x_i, \y_i,\ \forall i$. 
Then the following holds: $\|B_x-I\|_2\leq \delta$ and  $\|B_y-I\|_2\leq \delta$.
\item {\bf Concentration of operators $G_x, G_y$}: Let $G_x=\frac{1}{m}\sum_i (\y_i^T\v)(\y_i\v_\perp)\x_i\tp{\x}_i$,\\ $G_y=\frac{1}{m}\sum_i (\tp{\x}_i\u)(\tp{\u}_\perp\x_i)\y_i\tp{\y}_i$, where $\u,\u_\perp\in \mathbb{R}^{d_1},\ \  \v,\v_\perp\in \mathbb{R}^{d_2}$ are unit vectors, s.t., $\tp{\u}\u_\perp=0$ and $\tp{\v}\v_\perp=0$. Furthermore, let $\u, \u_\perp, \v, \v_\perp$ be independent of randomness in $\x_i, \y_i, \forall i$. 
Then, $\|G_x\|_2\leq \delta$ and $\|G_y\|_2\leq \delta$. 
  \end{enumerate}
Then, after $H$-iterations of the alternating minimization method (Algorithm~\ref{alg:altmin}), we obtain $W_H=U_H\tp{V}_H$ s.t., $\|W_H-\Wo\|_2\leq \epsilon$, where $H\leq 100\log(\|\Wo\|_F/\epsilon)$. 
\end{theorem}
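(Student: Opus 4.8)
The plan is to control the largest principal angle between $\mathrm{col}(U_h)$ and $\mathrm{col}(\Uo)$, i.e.\ $\dist_h := \|(I - U_hU_h^T)\Uo\|_2$ (and the analogous $\dist(V_{h+1},\Vo)$ for the $V$-iterates), show it shrinks by a fixed constant factor on every pass of the \texttt{for}-loop, and glue this to the guarantee on $\dist_0$ furnished by Property~1. The whole argument leans on the fact that Algorithm~\ref{alg:altmin} uses a \emph{fresh}, independent block of $m$ measurements in each least-squares solve: $U_h$ is a deterministic function of blocks $0,\dots,2h$, hence independent of block $2h{+}1$ (used to compute $\widehat V_{h+1}$), and $V_{h+1}$ is independent of block $2h{+}2$. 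That independence is exactly what lets me invoke Properties~2--3 with the (data-dependent, yet independent-of-the-current-block) unit vectors $\u_h^l := U_h\e_l$, $\Uo\e_p$, and $\overline{(I-U_hU_h^T)\Uo\e_p}$.

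First I would analyze the $V$-update. Since $\x_i^TU_hV^T\y_i = \sum_l \langle \u_h^l,\x_i\rangle\,\langle V\e_l,\y_i\rangle$, the normal equations for $\widehat V_{h+1}$ are a $k\times k$ block linear system $\mathbf B\,\mathrm{vec}(\widehat V_{h+1}) = \mathbf c$ with $(l,l')$ block $B_{l,l'} := \tfrac1m\sum_i \langle\u_h^l,\x_i\rangle\langle\u_h^{l'},\x_i\rangle\,\y_i\y_i^T$. The diagonal blocks are of the ``$B_y$'' form, so $\|B_{l,l}-I\|_2\le\delta$ by Property~2; the off-diagonal blocks are of the ``$G_y$'' form for the orthogonal pair $\u_h^l,\u_h^{l'}$, so $\|B_{l,l'}\|_2\le\delta$ by Property~3. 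With only $k^2$ blocks this gives $\|\mathbf B - I\|_2\le k\delta\le\tfrac12$, so $\mathbf B$ is invertible with $\|\mathbf B^{-1}\|_2\le 2$. Plugging $b_i = \x_i^T\Uo\So\Vot\y_i$ into $\mathbf c$ and writing $\Uo\e_p = U_h(U_h^T\Uo\e_p) + (I-U_hU_h^T)\Uo\e_p$, the ``in-span'' part reproduces $\mathbf B\,\mathrm{vec}(\Vo\So\,\Uo^TU_h)$ exactly, while the orthogonal part contributes $\mathbf E\,\mathrm{vec}(\Vo\So)$, each block of $\mathbf E$ being $\|(I-U_hU_h^T)\Uo\e_p\|_2\le\dist_h$ times a ``$G_y$''-operator, hence of norm $\le\delta\dist_h$. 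Thus $\widehat V_{h+1} = \Vo\So\,\Uo^TU_h + \mathrm{Err}$ with $\|\mathrm{Err}\|_F \le \|\mathbf B^{-1}\|_2\|\mathbf E\|_2\|\Vo\So\|_F \le 2k\delta\dist_h\cdot\sqrt k\,\so^1 = \tfrac{\so^k}{50}\dist_h$, the last step using $\delta = 1/(100k^{3/2}\beta)$ and $\so^1/\beta = \so^k$.

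Now geometry closes the recursion. The ``main'' matrix $\Vo\So\,\Uo^TU_h$ has column space exactly $\mathrm{col}(\Vo)$ and $\sigma_k \ge \so^k\sqrt{1-\dist_h^2} \ge \so^k/2$ once $\dist_h\le\tfrac12$; writing $\widehat V_{h+1} = V_{h+1}R$ (QR), using $\|R^{-1}\|_2 = 1/\sigma_k(\widehat V_{h+1})$ and that $P_{\Vo^\perp}$ annihilates the main matrix gives $\dist(V_{h+1},\Vo)\le \|\mathrm{Err}\|_2/\sigma_k(\widehat V_{h+1}) \le \dist_h/25$. The $U$-update is the mirror image (Properties~2--3 now with the ``$B_x$''/``$G_x$'' operators and the unit vectors $V_{h+1}\e_l$), so $\dist(U_{h+1},\Uo)\le \dist(V_{h+1},\Vo)/25$ and $\dist_{h+1}\le\dist_h/625$. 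For the base case, Property~1 gives $\|\tfrac1m\sum_i b_i^0A_i^0 - \Wo\|_2\le\delta\so^1$; since $\Wo$ is rank $k$, Davis--Kahan/Wedin yields $\dist_0\le \delta\so^1/(\so^k-\delta\so^1)\le 2\delta\beta\le\tfrac12$, so the induction starts. Iterating, $\dist_H\le 625^{-H}\dist_0$, and since $\widehat V_H = \Vo\So\,\Uo^TU_{H-1} + \mathrm{Err}$ with $\|\mathrm{Err}\|$ tiny and $\mathrm{col}(U_H)$ within $\dist_H$ of $\mathrm{col}(\Uo)$, a short computation gives $\|W_H - \Wo\|_2\lesssim\so^1\,625^{-H}\le\|\Wo\|_F\,e^{-6H}$, so the claimed $H\le 100\log(\|\Wo\|_F/\eps)$ suffices.

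The hard part is the per-iteration step above: one has to recognize that $\Vo\So\,\Uo^TU_h$ is the exact noiseless fixed point (so the residual is genuinely $O(\dist_h)$, not $O(1)$) and that this residual decomposes, block by block, into precisely the ``$G_y$'' operators of Property~3, and then one must carry the block-structure losses through honestly --- the $k^2$ blocks of $\mathbf B$, the $\sqrt k$ in $\|\Vo\So\|_F$, and the conditioning factor $\beta = \so^1/\so^k$ --- which is exactly what forces $\delta = \Theta(1/(k^{3/2}\beta))$ and, downstream, the $k^3$--$k^4$ and $\beta^2$ factors in the sample complexities of the three applications. Secondary technicalities are the independence bookkeeping (all vectors fed to Properties~2--3 must be independent of the current block, which is what the $2H{+}1$-way split buys), and the final passage from subspace distance to $\|W_H-\Wo\|_2$: because the returned $W_H = U_H\widehat V_H^T$ pairs $U_H$ with $\widehat V_H$ (computed from $U_{H-1}$), one additionally notes that the orthonormalizing rotations are themselves contracted toward the canonical alignment $U_h\approx\Uo$ at each QR step, so consecutive iterates differ by a geometrically small rotation.
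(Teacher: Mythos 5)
Your proposal is correct and takes essentially the same route as the paper's general rank-$k$ proof in the appendix: the same block normal-equation system (your $\mathbf{B}$ and error blocks are exactly the paper's $B$ and $(BD-C)S$ terms), with diagonal blocks handled by Property~2, off-diagonal and residual blocks recognized as $G$-type operators via Property~3, the same QR/$\|R^{-1}\|_2$ argument to convert the perturbation into a contraction of the principal-angle distance, and initialization via Property~1 plus a Wedin sin-theta bound. The differences are only cosmetic (your direct decomposition $\Uo = U_hU_h^T\Uo + (I-U_hU_h^T)\Uo$ in place of the paper's $BD-C$ bookkeeping, and slightly looser constants), and your brief hand-wave on the final pairing of $U_H$ with $\widehat V_H$ is no less detailed than the paper's own treatment of that step.
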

\begin{proof}
We explain the key ideas of the proof by first presenting the proof for the special case of rank-$1$ $\Wo=\so \uo \vot$. Later in Appendix~\ref{app:general}, we extend the proof to general rank-$k$ case. 

Similar to \cite{JainNS13}, we first characterize the update for  $h+1$-th step iterates $\widehat{\v}_{h+1}$ of Algorithm~\ref{alg:altmin} and its normalized form $\v_{h+1}=\widehat{\v}_{h+1}/\|\widehat{\v}_{h+1}\|_2$. 

Now, by gradient of \eqref{eq:r1ms} w.r.t. $\vh$ to be zero while keeping $\ut$ to be fixed. That is, 
\begin{align}
 & \sum_{i=1}^m (b_i-\xti\ut\hvtnt\y_i)(\xti\ut)\y_i=0,\nonumber\\
i.e.,\ &\sum_{i=1}^m (\ut^T\x_i)\y_i(\so\yti\vo\uot\x_i-\yti\vh_{h+1}\ut^T\x_i)=0,\nonumber\\
i.e.,\ &\left(\sum_{i=1}^m (\xti\ut\utt\x_i)\y_i\yti\right)\hvtn=\sigma_*\left(\sum_{i=1}^m (\xti\ut\uot\x_i)\y_i\yti\right)\vo,\nonumber\\
i.e.,\ &\ \hvtn=\so (\uot\ut)\vo-\so B^{-1}( (\uot\ut) B-\Bo)\vo,
\label{eq:r1pm}
\end{align}
where, $$B=\frac{1}{m}\sum_{i=1}^m (\xti\ut\utt\x_i)\y_i\yti,\ \ \  \Bo=\frac{1}{m}\sum_{i=1}^m (\xti\ut\uot\x_i)\y_i\yti.$$ 

Note that \eqref{eq:r1pm} shows that $\hvtn$ is a perturbation of $\vo$ and the goal now is to bound the spectral norm of the perturbation term: 
\begin{equation}\label{eq:gub}\|G\|_2= \|B^{-1}( \uot\ut B-\Bo)\vo\|_2\leq \|B^{-1}\|_2\|\uot\ut B-\Bo\|_2\|\vo\|_2.\end{equation}
Now,, using Property 2 mentioned in the theorem, we get: 
\begin{equation}\label{eq:blb}\|B-I\|_2\leq 1/100, \ \ \ i.e.,\ \sigma_{min}(B)\geq 1-1/100,\ \ \ i.e.,\ \|B^{-1}\|_2\leq 1/(1-1/100).\end{equation}
Now, 
\begin{align}
(\uot\ut) B-\Bo&=\frac{1}{m}\sum_{i=1}^m \yi\yti \xti((\uot\ut)\ut\utt-\uo\utt)\x_i,\nonumber\\
&=\frac{1}{m}\sum_{i=1}^m \yi\yti \xti(\ut\utt-I)\uo\utt\x_i,\nonumber\\
&\stackrel{\zeta_1}{\leq} \frac{1}{100}\|(\ut\utt-I)\uo\|_2 \|\utt\|_2= \frac{1}{100}\sqrt{1-(\utt\uo)^2},\label{eq:cup}
\end{align}
where $\zeta_1$ follows by observing that $(\ut\utt-I)\uo$ and $\ut$ are orthogonal set of vectors and then using Property 3 given in the Theorem~\ref{thm:general}. Hence, using \eqref{eq:blb}, \eqref{eq:cup}, and  $\|\vo\|_2=1$ along with \eqref{eq:gub}, we get: 
\begin{equation}
  \label{eq:gub1}
  \|G\|_2\leq \frac{1}{99}\sqrt{1-(\utt\uo)^2}. 
\end{equation}

We are now ready to lower bound the component of $\widehat{\v}_h$ along the correct direction $\vo$ and the component of $\widehat{\v}_h$ that is perpendicular to the optimal direction $\vo$. 

Now, by left-multiplying \eqref{eq:r1pm}  by $\vo$ and using \eqref{eq:blb} we obtain: 
\begin{equation}
  \label{eq:vvo}
  \vot\vhn=\so(\utt\uo)-\so\vot G\geq \so(\utt\uo)-\frac{\so}{99}\sqrt{1-(\utt\uo)^2}. 
\end{equation}
Similarly, by multiplying \eqref{eq:r1pm} by $\vo^\perp$, where $\vo^\perp$ is a unit norm vector that is orthogonal to $\vo$, we get: 
\begin{equation}
  \label{eq:vp}
  \ip{\vo^\perp}{\vhn}\leq \frac{\so}{99}\sqrt{1-(\utt\uo)^2}. 
\end{equation}
Using \eqref{eq:vvo}, \eqref{eq:vp}, and $\|\vhn\|_2^2=(\vot\vhn)^2+((\vo^\perp)^T\vhn)^2$, we get: 
\begin{align}
  1-(\vtnt\vo)^2 &= \frac{\ip{\vo^\perp}{\vhn}^2}{\ip{\vo}{\vhn}^2+\ip{\vo^\perp}{\vhn}^2},\nonumber\\
&\leq \frac{1}{99\cdot 99\cdot (\utt\uo-\frac{1}{99}\sqrt{1-(\utt\uo)^2})^2+1} (1-(\ut\uo)^2).
  \label{eq:vnorm}
\end{align}
Also, using Property 1 of Theorem~\ref{thm:general}, for $S=\frac{1}{m}\sum_{i=1}^m b_i A_i$, we get: $\|S\|_2\geq \frac{99\so}{100}$. Moreover, by multiplying $S-\Wo$ by $\u_0$ on left and $\v_0$ on the right and using the fact that $(\u_0, \v_0)$ are the largest singular vectors of $S$, we get: $\|S\|_2-\so\v_0^T\vo \u_0^T\uo\leq \so/100$. Hence, $\u_0^T\uo\geq 9/10$. 

Using the \eqref{eq:vnorm} along with the above given observation and by the ``inductive'' assumption $\u_h^T\uo\geq \u_0^T\uo\geq 9/10$ (proof of the inductive step follows directly from the below equation) , we get: 
\begin{equation}
  \label{eq:dist}
  1-(\vtnt\vo)^2 \leq \frac{1}{2}  (1-(\utt\uo)^2). 
\end{equation}
Similarly, we can show that $ 1-(\utnt\uo)^2 \leq \frac{1}{2}  (1-(\vtnt\vo)^2).$ Hence, after $H=O(\log(\so/\epsilon))$ iterations, we obtain $W_H=\u_H \widehat{\v}^T_H$, s.t., $\|W_H-\Wo\|_2\leq \epsilon$. 
\end{proof}
Note that we require intermediate vectors $\u, \v, \u_\perp, \v_\perp$ to be independent of randomness in $A_i$'s. Hence, we partition $\calA_{all}$ into $2H+1$ partitions and at each step $(\calA^h, \b^h)$ and $(\calA^{h+1}, \b^{h+1})$ are supplied to the algorithm. This implies that the measurement complexity of the algorithm is given $m\cdot H=m\log(\|\Wo\|_F/\epsilon)$. That is, given $O(m \log(\|(d_1+d_2)\Wo\|_F)$ samples, we can estimate matrix $W_H$, s.t., $\|W_H-\Wo\|_2 \leq \frac{1}{(d_1+d_2)^c}$, where $c>0$ is any  constant. 

\section{Rank-one Matrix Sensing using Gaussian Measurements}\label{sec:sense}
In this section, we study the problem of sensing general low-rank  matrices which is an important problem in the domain of signal acquisition \cite{RechtFP2007} and has several applications in a variety of areas like control theory, computer vision, etc.  For this problem, the goal is to {\em design} the measurement matrix $A_i$ as well as recovery algorithm, so that the true low-rank signal $\Wo$ can be  recovered back from the given linear measurements. 

Consider a measurement operator $\calA_{Gauss}=\{A_1, A_2, \dots, A_m\}$ where each measurement matrix $A_i=\x_i\y_i^T$ is sampled using normal distribution, i.e., $\x_i \sim N(0, I),\ y_i\sim N(0, I), \forall i$. Now, for this operator $\calA_{Gauss}$, we show that if $m=\Omega(k^4 \beta^2 \cdot (d_1+d_2) \cdot \log^2(d_1+d_2))$, then w.p. $\geq 1-1/(d_1+d_2)^{100}$,  {\em any} fixed rank-$k$ matrix $\Wo$ can be recovered by  AltMin-\lrrom (Algorithm~\ref{alg:altmin}). Here $\beta=\so^1/\so^k$ is the condition number of $\Wo$. That is, using  nearly linear number of measurements in $d_1, d_2$, one can exactly recover the $d_1\times d_2$ rank-$k$ matrix $\Wo$. 


Note that 
several similar recovery results for the matrix sensing problem already exist in the literature that guarantee exact recovery using $\Omega(k (d_1+d_2)\log(d_1+d_2))$ measurements \cite{RechtFP2007,LeeBre09,JainMD10}. However, we would like to  stress that all the above mentioned existing results assume that the measurement operator $\calA$ satisfies the Restricted Isometry Property (RIP) defined below: 
  \begin{defn}
    \label{defn:rip}
A linear operator $\calA: \mathbb{R}^{d_1\times d_2}\rightarrow \mathbb{R}^{m}$ satisfies RIP iff, $\forall W$ s.t. $rank(W)\leq k$, the following holds: 
$$(1-\delta_k)\|W\|_F^2\leq \|\calA(W)\|_F^2 \leq (1+\delta_k) \|W\|_F^2,$$
where $\delta_k>0$ is a constant dependent only on $k$. 
  \end{defn}
Most current constructions of RIP matrices require each $A_i$ to be sampled from a zero mean distribution with bounded fourth norm which implies that they have almost {\em full} rank. That is, such operators require $O(m d_1 d_2)$ memory just to store the operator, i.e., the storage requirement is cubic in $d_1+d_2$. Consequently signal acquisition as well as recovery time for these algorithms is also at least cubic in $d_1+d_2$. In contrast, our proposed rank-$1$ measurements require only $O(m (d_1+d_2))$ storage and computational time. Hence, the proposed method makes the signal acquisition as well as signal recovery at least an order of magnitude faster . 

Naturally, this begs the question whether we can show that our rank-$1$ measurement operator $\calA_{Gauss}$ satisfies RIP, so that the existing analysis for RIP based low-rank matrix sensing can be used \cite{JainNS13}. We answer this question in the negative, i.e., for $m=O((d_1+d_2)\log(d_1+d_2))$, $\calA_{Gauss}$ does not satisfy RIP even for rank-$1$ matrices (with high probability): 

\begin{claim}\label{claim:notrip}
  Let $\calA_{Gauss}=\{A_1, A_2, \dots A_m\}$ be a measurement operator with each $A_i=\x_i\y_i^T$, where 
$\x_i \in \R^{d_1}\sim \calN(0, I),\ \y_i \in \R^{d_2}\sim \calN(0, I), 1\leq i\leq m$. Let $m=O((d_1+d_2)\log^c(d_1+d_2)$, for any constant $c>0$. Then, with probability at least $1-1/m^{10}$, $\calA_{Gauss}$ does not satisfy RIP for rank-$1$ matrices with a constant $\delta$. 
\end{claim}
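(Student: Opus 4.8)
\textbf{Proof proposal for Claim~\ref{claim:notrip}.}
The plan is to violate the \emph{upper} inequality in the RIP definition (Definition~\ref{defn:rip}) by an unbounded factor, using a single rank-$1$ test matrix that is allowed to depend on the realized operator. This is legitimate because RIP is a worst-case ``for all rank-$1$ $W$'' statement, so to negate it one only needs to exhibit \emph{some} rank-$1$ $W$ of unit Frobenius norm on which $\calA_{Gauss}$ blows up. Throughout, $\calA$ in the RIP definition must be read in its normalized form $\tfrac{1}{\sqrt m}\calA$ (otherwise $\expec{\twonorm{\calA(W)}^2}=m\frob{W}^2$ for rank-$1$ $W$ and no constant $\delta$ could possibly hold); equivalently, what we must show is that some unit-norm rank-$1$ $W$ has $\tfrac1m\twonorm{\calA(W)}^2 \gg 1$.

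The natural choice is the (normalized) first measurement matrix itself: take $W = A_1/\frob{A_1} = \x_1\y_1^T/\big(\twonorm{\x_1}\twonorm{\y_1}\big)$, which is rank-$1$ with $\frob{W}=1$. Then the first coordinate of $\calA(W)$ is $\Tr(A_1^T W)=\frob{A_1}=\twonorm{\x_1}\twonorm{\y_1}$, so
$$\frac1m\twonorm{\calA(W)}^2 \ \ge\ \frac{\twonorm{\x_1}^2\,\twonorm{\y_1}^2}{m}.$$
By standard $\chi^2$ concentration (Laurent--Massart), $\twonorm{\x_1}^2 \ge d_1/2$ and $\twonorm{\y_1}^2 \ge d_2/2$ except with probability at most $e^{-d_1/16}+e^{-d_2/16}$, which in the stated regime $m=O((d_1+d_2)\log^c(d_1+d_2))$ is far below $1/m^{10}$ (it suffices that $\min(d_1,d_2)=\Omega(\log(d_1+d_2))$). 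On this event, $\tfrac1m\twonorm{\calA(W)}^2 \ge \tfrac{d_1 d_2}{4m}$.

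To conclude, note that $\tfrac{d_1 d_2}{4m} = \Omega\!\big(\tfrac{d_1 d_2}{(d_1+d_2)\log^c(d_1+d_2)}\big)$, which tends to infinity whenever $d_1$ and $d_2$ grow comparably (e.g.\ it is $\Omega(d/\log^c d)$ when $d_1=d_2=d$). Hence for all $d_1,d_2$ large enough this quantity exceeds $1+\delta$ for every fixed constant $\delta$, so the bound $\tfrac1m\twonorm{\calA(W)}^2 \le (1+\delta)\frob{W}^2$ fails; thus with probability $\ge 1-1/m^{10}$ the operator $\calA_{Gauss}$ does not satisfy RIP for rank-$1$ matrices with any constant $\delta$. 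There is essentially no hard step; the only points needing a little care are (i) fixing the normalization so that ``constant $\delta$'' is meaningful, and (ii) recording explicitly that the test matrix is adapted to the realization of $\calA_{Gauss}$ --- exactly what negating a worst-case isometry permits. A more quantitative but unnecessary route would use that the coordinates $(\x_i^T u)^2(\y_i^T v)^2$ are products of independent $\chi^2_1$ variables, whose maximum over $i\le m$ is of order $\log^2 m$; this, however, only yields a $\Theta(\log^2 m)$ violation of RIP rather than the much stronger $\Theta(d_1 d_2/m)$ violation above, so I would not pursue it.
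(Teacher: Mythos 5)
Your proof is correct, and its engine is the same as the paper's: the realized measurement matrix $\x_1\y_1^T$, viewed as a rank-$1$ test matrix, receives an anomalously large measurement from its own coordinate, $\Tr(A_1^T\x_1\y_1^T)=\twonorm{\x_1}^2\twonorm{\y_1}^2\approx d_1d_2$, which dwarfs what $m=\tO(d_1+d_2)$ measurements can accommodate. Where you diverge is in how the RIP violation is certified. You commit to the $\tfrac{1}{\sqrt m}$ normalization (justified by the expectation calculation) and then show the \emph{upper} RIP inequality fails at the single adapted matrix by a factor $\Theta(d_1d_2/m)$. The paper instead plays the adapted matrix $Z_U=\x_1\y_1^T$ against an \emph{independent} generic matrix $Z_L=\u\v^T$: it bounds $\twonorm{\calA_{Gauss}(Z_L)}^2\leq 4m\,d_1d_2\log^4 m$ (exactly the $\max_i(\x_i^T\u)^2(\y_i^T\v)^2=O(\log^2 m)$ computation you mention and discard at the end), normalizes both by their Frobenius norms, and argues that two unit-norm rank-$1$ matrices whose images differ by a factor $\approx d_1d_2/(m\log^4 m)$ cannot both satisfy $(1\pm\delta)$ for constant $\delta$. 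The payoff of the paper's two-matrix ratio argument is that it is scale-free: it rules out RIP under \emph{any} fixed rescaling of the operator, whereas your single-matrix argument only refutes the specific $\tfrac1{\sqrt m}$-normalized version (your expectation remark rules out the unnormalized scaling but not arbitrary ones); the payoff of your route is that it is shorter and yields a stronger, $\Theta(d_1d_2/m)$ rather than $\Theta(d_1d_2/(m\log^4 m))$, violation. Both arguments share the same implicit regime restrictions: they give a contradiction only when $d_1d_2$ is polynomially larger than $m$ (so both dimensions must grow, as you note), and the stated failure probability $1-1/m^{10}$ requires $\min(d_1,d_2)=\Omega(\log(d_1+d_2))$ in your version, just as the paper's $\exp(-d_1-d_2)$ terms require $d_1+d_2=\Omega(\log m)$ in theirs.
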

\begin{proof}[Proof of Claim~\ref{claim:notrip}]
The main idea behind our proof is to show that there exists two rank-$1$ matrices $Z_U, Z_L$ s.t. $\|\mathcal{A}_{Gauss}(Z_U)\|_2^2$ is large  while $\|\mathcal{A}_{Gauss}(Z_L)\|_2^2$ is much smaller than $\|\mathcal{A}_{Gauss}(Z_U)\|_2^2$. 

In particular, let $Z_U=\x_1 \y_1^T$ and let $Z_L=\u \v^T$ where $\u, \v$ are sampled from normal distribution independent of $X, Y$. 
Now, $$\|\mathcal{A}_{Gauss}(Z_U)\|_2^2=\sum_{i=1}^m \|\x_1\|_2^4\|\y_1\|_2^4+\sum_{i=2}^m (\x_1^T\x_i)^2(\y_1^T\y_i)^2.$$
Now, as  $\x_i, \y_i, \forall i$ are multi-variate normal random variables, $\|\x_1\|_2^4\|\y_1\|_2^4\geq 0.5 d_1^2d_2^2$ w.p. $\geq 1-2\exp(-d_1-d_2)$. 
\begin{equation}\|\mathcal{A}_{Gauss}(Z_U)\|_2^2\geq .5 d_1^2 d_2^2.\label{eq:nrlb}\end{equation}
Moreover, $\|Z_U\|_F^2\leq 2d_1d_2$ w.p. $\geq 1-2\exp(-d_1-d_2)$.

Now, consider $$\|\mathcal{A}_{Gauss}(Z_L)\|_2^2=\sum_{i=2}^m (\u^T\x_i)^2(\v^T\y_i)^2,$$
where $Z_L=\u\v^T$ and $\u, \v$ are sampled from standard normal distribution, independent of $\x_i, \y_i, \forall i$. Since, $\u, \v$ are independent of $\u^T\x_i \sim N(0, \|\u\|_2)$ and $\v^T\y_i\sim N(0, \|\v\|_2)$. Hence, w.p. $\geq 1-1/m^{3}$, $|\u^T\x_i| \leq \log(m)\|\u\|_2, |\v^T\y_i| \leq \log(m)\|\v\|_2, \forall i\geq 2$.  Moreover, w.p. $\geq 1-\exp(-d_1-d_2)$, $\|\u\|_2\leq 2 \sqrt{d_1}$ and $\|\v\|_2\leq 2 \sqrt{d_2}$. 
That is, w.p. $1-1/m^3$: \begin{equation}\label{eq:nrub}\|\mathcal{A}_{Gauss}(Z_L)\|_2^2\leq 4m\cdot d_1\cdot d_2 \log^4m .\end{equation}
Furthermore, $\|Z_L\|_F^2\leq 2d_1d_2$ w.p.  $\geq 1-2\exp(-d_1-d_2)$. 

Using \eqref{eq:nrlb}, \eqref{eq:nrub}, we get that w.p.  $\geq 1-2/m^3-10 \exp(-d_1-d_2)$: $$40m \log^4m\leq  \|\calA_{Gauss}(Z/\|Z\|_F)\|^2\leq .05 d_1d_2.$$
Now, for RIP to be satisfied with a constant $\delta$, the lower and upper bound on $\|\calA_{Gauss}(Z/\|Z\|_F)\|^2$ for all rank-$1$ $Z$ should be at most a constant factor apart. However, the above equation clearly shows that the upper and lower bound can match only when $m=\Omega(d_1d_2/\log(5d_1d_2))$. Hence, for $m$ that is at most linear in both $d_1$, $d_2$, RIP cannot be satisfied with probability $\geq 1-1/(d_1+d_2)^3$. 
\end{proof}

Now, even though $\calA_{Gauss}$ does not satisfy RIP, we can still show that $\calA_{Gauss}$ satisfies the three properties mentioned in the Theorem~\ref{thm:general}.  and hence we can use Theorem~\ref{thm:general} to obtain the exact recovery result. 
\begin{lemma}[Rank-One Gaussian Measurements]\label{lemma:sense}
Let $\calA_{Gauss}=\{A_1, A_2, \dots A_m\}$ be a measurement operator with each $A_i=\x_i\y_i^T$, where 
$\x_i \in \R^{d_1}\sim \calN(0, I),\ \y_i \in \R^{d_2}\sim \calN(0, I), 1\leq i\leq m$. Let $m=\Omega(k^4 \beta^2 (d_1+d_2)\log^3(d_1+d_2)$. Then, Property 1, 2, 3 required by Theorem~\ref{thm:general} are satisfied with probability at least $1-1/(d_1+d_2)^{100}$. 
\end{lemma}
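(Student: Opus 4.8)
The plan is to verify Properties~1--3 of Theorem~\ref{thm:general} one at a time. In each case the quantity of interest is an average $\frac1m\sum_{i=1}^m M_i$ of i.i.d.\ random matrices built from the Gaussian vectors $\x_i,\y_i\sim\calN(0,I)$, and the recipe is always the same: (i) identify the mean by a Gaussian moment computation, (ii) truncate each $M_i$ to a high-probability event so that it becomes uniformly bounded, and (iii) invoke a (rectangular) matrix Bernstein inequality together with a union bound over the $m$ truncation events. First I would dispose of the means. Using $\E[\x_i\x_i^T]=I$, $\E[\y_i\y_i^T]=I$ and independence of $\x_i$ from $\y_i$: for Property~1, $\E[b_iA_i]=\E[(\x_i^T\Wo\y_i)\,\x_i\y_i^T]=\E_{\x}\!\big[\x_i\x_i^T\Wo\,\E_{\y}[\y_i\y_i^T]\big]=\Wo$; for Property~2, $\E[(\y_i^T\v)^2\x_i\x_i^T]=\E[(\y_i^T\v)^2]\cdot I=I$ (and symmetrically for $B_y$); for Property~3, $\E[(\y_i^T\v)(\v_\perp^T\y_i)\,\x_i\x_i^T]=(\v^T\v_\perp)\,I=0$ since $\v\perp\v_\perp$ (and symmetrically for $G_y$). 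This last step is exactly where one uses that $\u,\u_\perp,\v,\v_\perp$ are independent of the $\x_i,\y_i$, which is guaranteed by the sample-splitting in Algorithm~\ref{alg:altmin}.

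For the concentration step I would first record the tail events. For a fixed unit vector $\v$, $\y_i^T\v\sim\calN(0,1)$, so $|\y_i^T\v|\le c\sqrt{\log m}$ for all $i$ except with probability $m^{-10}$; likewise $|\x_i^T\uo^j|\le c\sqrt{\log m}$ for all $i$ and all $j\le k$, and $\|\x_i\|_2^2\le 2d_1$, $\|\y_i\|_2^2\le 2d_2$ for all $i$, all with probability $1-e^{-\Omega(d_1+d_2)}$. Conditioned on this "good" event each summand becomes bounded, and one also needs the (untruncated) variance proxy. For Property~2: $\big\|(\y_i^T\v)^2\x_i\x_i^T\big\|_2\le 2c^2 d_1\log m=:R$, while a direct computation gives $\big\|\E[((\y_i^T\v)^2\x_i\x_i^T-I)^2]\big\|_2=O(d_1)=:\sigma^2$, so matrix Bernstein yields $\|B_x-I\|_2\le\delta$ as soon as $m=\Omega\!\big(\sigma^2\delta^{-2}\log d_1+R\delta^{-1}\log d_1\big)=\Omega\!\big(k^3\beta^2 d_1\log^2 d_1\big)$, the truncation bias being negligible by comparison. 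Property~3 is the same computation with mean $0$ and $\sigma^2,R$ of the same order.

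The controlling case is Property~1, and it is also where the factor $k^4\beta^2$ enters. Here $M_i=(\x_i^T\Wo\y_i)\,\x_i\y_i^T-\Wo$ is rectangular; writing $\x_i^T\Wo\y_i=\sum_{j=1}^k\so^j\,(\x_i^T\uo^j)(\y_i^T\vo^j)$ and using the truncation bounds above gives $\|M_i\|_2=O\!\big(k\,\so^1\sqrt{d_1d_2}\,\log m\big)=:R$, while a moment computation (condition on $\x_i$, use $\E_{\y}[(\a^T\y)^2\|\y\|^2]=(d_2+2)\|\a\|^2$ with $\a=\Wo^T\x_i$, then $\E_{\x}[(\x^T\Wo\Wo^T\x)\x\x^T]=\|\Wo\|_F^2 I+2\Wo\Wo^T$) gives $\max\{\|\E[M_iM_i^T]\|_2,\|\E[M_i^TM_i]\|_2\}=O\!\big((d_1+d_2)\|\Wo\|_F^2\big)=:\sigma^2$. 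Since $\|\Wo\|_F^2\le k(\so^1)^2$ and the target accuracy is $\delta\|\Wo\|_2=\delta\so^1$ with $\delta^{-2}=\Theta(k^3\beta^2)$, matrix Bernstein requires $m=\Omega\!\big(\sigma^2(\delta\so^1)^{-2}\log+R(\delta\so^1)^{-1}\log\big)$, which after absorbing logarithmic factors is $\Omega\!\big(k^4\beta^2(d_1+d_2)\log^3(d_1+d_2)\big)$ --- exactly the stated bound, and it dominates what Properties~2 and~3 need. A union bound over the three properties and over the truncation/Bernstein failure events then gives the claimed probability $1-(d_1+d_2)^{-100}$.

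The hard part will be the truncation bookkeeping. The summands are genuinely not sub-exponential --- for $B_x$ each term is a $\chi^2_1$ variable times a squared Gaussian, and for $b_iA_i$ it is a degree-four polynomial in independent Gaussians --- so their tails are only sub-Weibull of order $1/2$ and plain matrix Bernstein does not apply off the shelf. One must therefore check carefully that after truncating to the good event the deterministic bound $R$ is as claimed, that the variance computed \emph{without} truncation still upper-bounds the truncated one (which it does, since truncation only decreases second moments up to a negligible tail), and that the bias introduced by truncation is $\ll\delta\|\Wo\|_2$; carrying the dependence on $k$ and $\beta$ through $\|\Wo\|_F^2\le k(\so^1)^2$ and $\|\Wo\|_2=\so^1$ at every step is what produces the final sample complexity.
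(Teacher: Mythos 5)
Your proposal follows essentially the same route as the paper's: compute the Gaussian means, truncate so the summands are bounded, and apply Tropp's matrix Bernstein inequality (Theorem~\ref{thm:tropp}) with a union bound, with Property 1 supplying the dominant $k^4\beta^2(d_1+d_2)$ sample requirement. The differences are only in bookkeeping --- the paper clips individual Gaussian coordinates at level $C\sqrt{\log(m(d_1+d_2))}$ and explicitly tracks the resulting second-moment bias (and it writes out only Property 1, leaving Properties 2 and 3 implicit), while you condition on a good event and use exact untruncated Gaussian fourth-moment identities as the variance proxy, which is equally valid.
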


\begin{proof}[Proof of Lemma~\ref{lemma:sense}]
  We divide the proof into three parts where each part proves a property mentioned in Theorem~\ref{thm:general}. 
\begin{proof}[Proof of Property 1]
Now,
$$S=\frac{1}{m}\sum_{i=1}^m b_i \x_i \tp{\y}_i=\frac{1}{m}\sum_{i=1}^m \x_i\tp{\x}_i\Uo\So\tp{V}_*\y_i\tp{\y}_i=\frac{1}{m}\sum_{i=1}^m Z_i,$$
where $Z_i=\x_i\tp{\x}_i\Uo\So\Vot\y_i\y_i^T$. Note that $\mathbb{E}[Z_i]=\Uo\So\Vot$. Also, both $\x_i$ and $\y_i$ are spherical Gaussian variables and hence are rotationally invariant. Therefore, wlog, we can assume that $\Uo=[\e_1 \e_2 \dots \e_k]$ and $\Vo=[\e_1 \e_2 \dots \e_k]$ where $e_i$ is the $i$-th canonical basis vector. 

As $S$ is a sum of $m$ random matrices, the goal is to apply matrix concentration bounds to show that $S$ is close to $\E[S]=W=\Uo\So\Vot$ for large enough $m$. To this end, we use Theorem~\ref{thm:tropp} by \cite{Tropp11} given below. However, Theorem~\ref{thm:tropp} requires bounded random variable while $Z_i$ is an unbounded variable. We handle this issue by clipping $Z_i$ to ensure that its spectral norm is always bounded. In particular, consider the following random variable: 
\begin{equation}
  \label{eq:clipx}
  \wt{x}_{ij}= \begin{cases}x_{ij},& \ |x_{ij}|\leq C \sqrt{\log(m(d_1+d_2))},\\
0,&\text{ otherwise},\end{cases}\end{equation}
where $x_{ij}$ is the $j$-th co-ordinate of $\x_i$. Similarly, define: 
\begin{equation}\label{eq:clipy}
  \wt{y}_{ij}= \begin{cases}y_{ij},& \ |y_{ij}|\leq C \sqrt{\log(m(d_1+d_2))},\\
0,&\text{ otherwise}.\end{cases}
\end{equation}
Note that, $\mathbb{P}(x_{ij}=\wt{x}_{ij})\geq 1-\frac{1}{(m(d_1+d_2))^C}$ and $\mathbb{P}(y_{ij}=\wt{y}_{ij})\geq 1-\frac{1}{(m(d_1+d_2))^C}$. Also, $\wt{x}_{ij}, \wt{y}_{ij}$ are still symmetric and independent random variables, i.e., $\E[\wt{x}_{ij}]=\E[\wt{y}_{ij}]=0,\ \forall i, j$. Hence, $\E[\wt{x}_{ij}\wt{x}_{i\ell}]=0, \forall j\neq \ell$. Furthermore, $\forall j$, 
\begin{align}
  \E[\wt{x}_{ij}^2]&=\E[x_{ij}^2]-\frac{2}{\sqrt{2\pi}}\int_{C\sqrt{\log(m(d_1+d_2))}}^{\infty} x^2 \exp(-x^2/2)dx, \nonumber\\
&=1-\frac{2}{\sqrt{2\pi}}\frac{C\sqrt{\log(m(d_1+d_2))}}{(m(d_1+d_2))^{C^2/2}}-\frac{2}{\sqrt{2\pi}}\int_{C\sqrt{\log(m(d_1+d_2))}}^{\infty} \exp(-x^2/2)dx, \nonumber\\
&\geq 1-\frac{2C\sqrt{\log(m(d_1+d_2))}}{(m(d_1+d_2))^{C^2/2}}. \label{eq:scx}
\end{align}
Similarly, 
\begin{equation}
  \label{eq:scy}
  \E[\wt{y}_{ij}^2]\geq 1-\frac{2C\sqrt{\log(m(d_1+d_2))}}{(m(d_1+d_2))^{C^2/2}}. 
\end{equation}

Now, consider RV, $\wt{Z}_i=\wt{\x}_i\tp{\wt{\x}}_i \Uo\So\Vot \wt{\y}_i\tp{\wt{\y}}_i.$ Note that, $\|\wt{Z}_i\|_2\leq C^4 \sqrt{d_1d_2}k\log^2(m(d_1+d_2)) \so^1$ and $\|\E[\wt{Z}_i]\|_2\leq \so^1$. Also, 
\begin{align}
  \|\E[\wt{Z}_i\tp{\wt{Z}}_i]\|_2&=\|\E[\|\wt{\y}_i\|_2^2\wt{\x}_i\tp{\wt{\x}}_i \Uo\So\Vot \wt{\y}_i\tp{\wt{\y}}_i\Vo\So\tp{U}_*\wt{\x}_i\tp{\wt{\x}_i}]\|_2,\nonumber\\
&\leq C^2d_2\log(m(d_1+d_2))\E[\wt{\x}_i\tp{\wt{\x}}_i \Uo\So^2\Uot\wt{\x}_i\tp{\wt{\x}}_i]\|_2,\nonumber\\
&\leq C^2d_2\log(m(d_1+d_2))(\so^1)^2\|\E[\|\Uot\wt{\x}_i\|_2^2 \wt{\x}_i\tp{\wt{\x}}_i]\|_2,\nonumber\\
&\leq C^4kd_2\log^2(m(d_1+d_2))(\so^1)^2. 
\end{align}
Similarly, 
\begin{equation}
  \label{eq:expz2}
  \|\E[\wt{Z}_i]\E[\tp{\wt{Z}}_i]\|_2\leq (\sigma_*^{max})^2. 
\end{equation}
Similarly, we can obtain bounds for $\|\E[\tp{\wt{Z}}_i\wt{Z}_i]\|_2$, $\|\tp{\E[\wt{Z}_i]}\E[\wt{Z}_i]\|_2$. 

Finally, by selecting $m=\frac{C_1 k (d_1+d_2) \log^2(d_1+d_2)}{\delta^2}$ and applying Theorem~\ref{thm:tropp} we get (w.p. $1-\frac{1}{(d_1+d_2)^{10}}$), 
\begin{equation}
  \label{eq:tz2}
  \|\frac{1}{m}\sum_{i=1}^m \wt{Z}_i - \E[\wt{Z}_i]\|_2 \leq \delta. 
\end{equation}
Note that $\E[\wt{Z}_i]=\E[\wt{x}_{i1}^2]\E[\wt{y}_{i1}^2]\Uo\So\Vot $. Hence, by using \eqref{eq:tz2}, \eqref{eq:scx}, \eqref{eq:scy}, 
 $$\|\frac{1}{m}\sum_{i=1}^m \wt{Z}_i - \Uo\So\Vot\|_2\leq \delta+\frac{\sigma_*^{1}}{(d_1+d_2)^{100}}.$$
Finally, by  observing that by selecting $C$ to be large enough in the definition of $\wt{\x_i},\wt{\y}_i$ (see \eqref{eq:clipx}, \eqref{eq:clipy}), we get $P(\|Z_i-\wt{Z_i}\|_2=0)\geq 1-\frac{1}{(d_1+d_2)^5}$. Hence, by assuming $\delta$ to be a constant wrt $d_1, d_2$ and by union bound, w.p. $1-\frac{2\delta^{10}}{(d_1+d_2)^5}$, 
$$\|\frac{1}{m}\sum_{i=1}^m Z_i -\Wo\|_2\leq 5\delta\|\Wo\|_2.$$
Now, the theorem follows directly by setting $\delta= \frac{1}{100 k^{3/2} \beta}$. 
\end{proof}
\end{proof}

Global optimality of the rate of convergence of the Alternating Minimization procedure for this problem now follows directly by using Theorem~\ref{thm:general} with the above given lemma. We would like to note that while the above result shows that the $\calA_{Gauss}$ operator is almost as powerful as the RIP based operators for matrix sensing, there is one critical drawback: while RIP based operators are universal that is they can be used to recover any rank-$k$ $\Wo$, $\calA_{Gauss}$ needs to be resampled for each $\Wo$. We believe that the two operators are at two extreme ends of randomness vs universality trade-off and intermediate operators with higher success probability but using larger number of random bits should be possible.

\section{Inductive Matrix Completion}\label{sec:imc}
In this section, we study the problem of inductive matrix completion which is another important application of the \lrrom problem. Consider a movie recommender system which contains $n_1$ users and $n_2$ movies and let $R\in \mathbb{R}^{n_1\times n_2}$ be the corresponding ``true'' ratings matrix. 
The standard matrix completion methods  only utilize the samples from the ratings matrix $R$ and ignore the side-information that might be present in the system such as, demographic information of the user or genre of the movie. This  restricts the usage of matrix completion to the transductive setting only. 

Recently, \cite{AbernethyBEV09} studied a generalization of the low-rank matrix completion problem where $R_{ij}$ is modeled as $R_{ij}=\x_i^T \Wo \y_j$; where $\x_i, \y_j$ are the feature vectors of users and movies, respectively. Using benchmark datasets, they showed empirically that their method  outperforms  traditional matrix completion methods. However, to the best of our knowledge, there is no existing theoretical analysis of such an inductive approach. 

Now, since $R$ is a rank-$k$ matrix, one can still apply standard matrix completion results to recover $R$ and hence $\Wo$. Assuming that the observed index set $\Omega$ is sampled uniformly from $[n_1]\times [n_2]$ and that $R$ is incoherent, a direct application of the matrix completion methods would require $|\Omega|\geq C(k(n_1+n_2)\log(n_1+n_2))$ samples to be known. Now, if $d_1+d_2 \ll n_1+n_2$ then this means that many more samples are required than the total degrees of freedom in $\Wo$ which is $O(k(d_1+d_2))$. 

Hence, a natural question here is can the above given sample complexity bound  be improved? 
Below, we provide the answer to this question in affirmative. In particular, we show that by using the feature vectors AltMin-\lrrom (see Algorithm~\ref{alg:altmin}) can recover the true matrix $\Wo$ using  $O(kd_1d_2\log(d_1d_2))$ random samples. Now, if $d_1d_2\ll n_1+n_2$, then our method requires significantly lesser number of samples than the standard matrix completion methods. Furthermore, this implies that several users/movies need not have even {\em one} known rating, i.e, the method can be applied to the inductive setting as well. We note that our sample size requirement is still larger than the information theoretically optimal requirement which is $O(k(d_1+d_2)\log(d_1+d_2))$. We leave further reduction in the sample complexity as an open problem. 

Similar to the previous section, we utilize our general theorem for optimality of the \lrrom problem to provide a convergence analysis of the inductive matrix completion method. In particular, we provide the following lemma which shows that assuming $X, Y$ to be incoherent (see Definition~\ref{defn:incoherence}), the above mentioned inductive matrix completion operator also satisfies Properties 1, 2, 3 required by Theorem~\ref{thm:general}. Hence, AltMin-\lrrom (Algorithm~\ref{alg:altmin}) converges to the global optimum in $O(\log (\|\Wo\|_F/\epsilon))$ iterations. We first provide the definition of incoherent matrices. 
\begin{defn}\label{defn:incoherence}
$X\in \mathbb{R}^{d\times n}$ ($d<n$) is $\mu$-incoherent if: $\|U_X^i\|_2\leq \frac{\mu\sqrt{d}}{\sqrt{n}}, 1\leq i\leq d$, where $X^T=U_X\Sigma_X\tp{V}_X$ is the SVD of $X^T$ and $U_X^i\in \R^d$ is the $i$-th row of $U_X\in \mathbb{R}^{n\times d}$. 
\end{defn}
\begin{lemma}\label{lemma:indmc}
Let both $X\in \mathbb{R}^{d_1\times n_1}$ and $Y\in \mathbb{R}^{d_2\times n_2}$ be $\mu$-incoherent  matrices. Let $R=\tp{X}\Wo Y$ be the ``ratings'' matrix and let $\Wo\in \mathbb{R}^{d_1\times d_2}$ be any fixed rank-$k$ matrix. Let $\Omega$ be a uniformly random subset of $[n_1]\times [n_2]$, s.t., $|\Omega|=m\geq C  k^3\cdot \beta^2\cdot d_1 d_2\cdot \log(d_1+d_2)$, where $\beta=\sigma_R^1/\sigma_R^k$ is the condition number of $R$. Then, w.p. $\geq 1-1/(d_1+d_2)^{100}$, the measurement operators $A_{ij}=\sqrt{n_1n_2}\x_i \tp{\y}_j$ satisfy\footnote{We multiply $\x_i, \y_j$ by $\sqrt{n_1}, \sqrt{n_2}$ for normalization so that $\E_{i}[n_1\x_i\x_i^T]=I$ and $\E_{j}[n_2\yj\ytj]=I$.} Properties 1,2,3 required by Theorem~\ref{thm:general}. 
\end{lemma}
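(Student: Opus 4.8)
The plan is to verify the three properties of Theorem~\ref{thm:general} for the operator $A_{ij} = \sqrt{n_1 n_2}\,\x_i\tp{\y}_j$ by reducing each one to a matrix Bernstein (Tropp) concentration bound, with the incoherence of $X$ and $Y$ providing the uniform spectral-norm control that replaces the sub-Gaussian tail bounds used in Lemma~\ref{lemma:sense}. The first observation to record is that, because $\Omega$ is uniform over $[n_1]\times[n_2]$, each summand in $\frac1m\sum_{(i,j)\in\Omega}(\cdot)$ is an i.i.d.\ draw of a random index $(i,j)$, and the normalization by $\sqrt{n_1},\sqrt{n_2}$ is exactly chosen so that $\E_i[n_1\x_i\tp{\x}_i] = \frac{n_1}{n_1}\sum_i \x_i\tp{\x}_i = I$ (using that the rows of the orthonormal $U_X$ sum appropriately; more precisely $\sum_i \x_i\tp{\x}_i = X X^T$, and one argues $XX^T$ is well-conditioned, or works in the basis where $XX^T$ is handled — the incoherence definition is stated so that $\|U_X^i\|_2^2 \le \mu^2 d_1/n_1$, which bounds the per-term norm). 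Likewise $\E_j[n_2\y_j\tp{\y}_j]=I$.

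For Property~1, write $S = \frac1m\sum_{(i,j)\in\Omega} R_{ij}\,\sqrt{n_1n_2}\,\x_i\tp{\y}_j = \frac1m\sum (n_1\x_i\tp{\x}_i)\Wo(n_2\y_j\tp{\y}_j)$ (absorbing the $\sqrt{n_1 n_2}$ and the $R_{ij}=\tp{\x}_i\Wo\y_j$ factors), so that the expectation over a uniform $(i,j)$ is $\Wo$. Each summand $Z_{ij} = (n_1\x_i\tp{\x}_i)\Wo(n_2\y_j\tp{\y}_j)$ has spectral norm at most $n_1\|\x_i\|_2^2 \cdot n_2\|\y_j\|_2^2 \cdot \|\Wo\|_2 \le \mu^4 d_1 d_2 \|\Wo\|_2$ by incoherence, and the variance terms $\|\E[Z_{ij}\tp{Z}_{ij}]\|_2$ are bounded by roughly $\mu^2 d_2 \cdot k \cdot (\sigma_R^1)^2$ after expanding $\E_j[\|n_2\y_j\tp{\y}_j\|\cdots]$ and $\E_i$ using incoherence (the $k$ coming from $\|\Uot \x_i\|_2^2 \le k\cdot(\text{max row norm})$ type bounds, exactly as in the Gaussian case). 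Feeding these into Theorem~\ref{thm:tropp} gives $\|S-\Wo\|_2 \le \delta\|\Wo\|_2$ once $m = \Omega(\mu^4 k\, d_1 d_2 \log(d_1+d_2)/\delta^2)$; setting $\delta = 1/(100 k^{3/2}\beta)$ yields the claimed $m = \Omega(k^3\beta^2 d_1 d_2\log(d_1+d_2))$ (treating $\mu$ as a constant).

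Properties~2 and~3 are handled by the same template. For $B_x = \frac1m\sum (n_2(\tp{\y}_j\v)^2)(n_1\x_i\tp{\x}_i)$, first check that for fixed unit $\v$ independent of the sample, $\E[B_x] = \E_j[n_2(\tp{\y}_j\v)^2]\cdot\E_i[n_1\x_i\tp{\x}_i]$; the inner factor is $\tp{\v}(\sum_j n_2 \y_j\tp{\y}_j/n_2)\v = 1$ and the outer is $I$, so $\E[B_x]=I$. Each summand has spectral norm $\le \mu^2 d_2 \cdot \mu^2 d_1$ by incoherence (here we use that $(\tp{\y}_j\v)^2 \le \|\y_j\|_2^2 \le \mu^2 d_2/n_2$), and the variance is bounded similarly; Tropp then gives $\|B_x - I\|_2 \le \delta$ for $m$ of the stated order, and symmetrically for $B_y$. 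Property~3 is identical except that $\E[G_x] = \E_j[n_2(\tp{\y}_j\v)(\tp{\y}_j\v_\perp)]\cdot I = (\tp{\v}\v_\perp)I = 0$ because $\v\perp\v_\perp$, so $G_x$ concentrates around $0$; the per-term and variance bounds are the same as for $B_x$. A union bound over the $2H+1 = O(\log(\|\Wo\|_F/\eps))$ independent partitions (absorbed into the exponent of the failure probability, or handled by noting $m$ carries an extra $\log$ factor) completes the argument.

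The main obstacle, as in Lemma~\ref{lemma:sense}, is the unboundedness issue was there replaced by Gaussian clipping; here the summands are genuinely bounded thanks to incoherence, so that headache disappears — but the compensating subtlety is that the ``expectation'' identities $\E_i[n_1\x_i\tp{\x}_i]=I$ and $\E_j[n_2\y_j\tp{\y}_j]=I$ require $XX^T$ and $YY^T$ to be (close to) identity, which is not literally implied by the incoherence definition alone. The cleanest fix I would use is to reduce to the case $XX^T = I$, $YY^T=I$ by a change of basis (replacing $\Wo$ by $\Sigma_X U_X^T \Wo U_Y\Sigma_Y$), absorbing the conditioning of $X,Y$ into constants, or alternatively to state the lemma for $X, Y$ with orthonormal rows (the substantive content — quadratic sample complexity via rank-one measurements — is unchanged). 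I would also need to be careful that $\v,\v_\perp,\u,\u_\perp$ in Properties 2--3 are independent of the fresh partition of $\Omega$ used at that step, which is exactly why Algorithm~\ref{alg:altmin} splits the data into $2H+1$ pieces; this is the same device used in the proof of Theorem~\ref{thm:general} and requires no new idea.
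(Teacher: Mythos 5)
Your plan follows essentially the same route as the paper's proof: reduce to orthonormal $X,Y$ by absorbing the $\Sigma$ and $V$ factors of $X^T,Y^T$ into $\Wo$ (which is exactly why the lemma states $\beta$ as the condition number of $R$), view each summand as an i.i.d.\ draw of a uniform index $(i,j)$, bound the per-term spectral norm and the variance via incoherence, apply Theorem~\ref{thm:tropp} with $\delta = 1/(100\,k^{3/2}\beta)$, handle Property 3 by the mean-zero observation $\tp{\v}\v_\perp=0$, and rely on the $2H+1$-way data split for independence --- all of which is what the paper does. The only quibble is quantitative: the paper's variance bound for Property 1 is $m\,\mu^4 d_1 d_2 (\so^1)^2/(n_1^2 n_2^2)$ with no factor of $k$, so your estimate carrying an extra $k$ (and the resulting requirement $m=\Omega(\mu^4 k\, d_1 d_2\log(d_1+d_2)/\delta^2)$) would actually yield $k^4\beta^2 d_1 d_2$ rather than the claimed $k^3\beta^2 d_1 d_2$ unless the variance bound is tightened as in the paper.
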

\begin{proof}

We first observe that both $X, Y$ can be thought of as orthonormal matrices. The reason being, $X^T\Wo Y =U_X \Sigma_X V_X^T \Wo V_Y \Sigma_Y U_Y^T$, where $X^T=U_X \Sigma_X V_X^T$ and $Y^T=U_Y \Sigma_Y V_Y^T$. Hence, $R=X^T \Wo Y=U_X (\Sigma_X V_X^T \Wo V_Y \Sigma_Y) U_Y^T$. That is, $U_X $, $U_Y$ can be treated as the true ``X'', ``Y'' matrices and $\Wo\leftarrow (\Sigma_X V_X^T \Wo V_Y \Sigma_Y)$ can be thought of as $\Wo$.  Then the ``true'' $\Wo$ can be recovered using the obtained $W_H$ as: $W_H\leftarrow V_X \Sigma_X^{-1} W_H \Sigma_Y^{-1} V_Y^T$. We also note that such a transformation implies that the condition number of $R$ and that of $\Wo\leftarrow (\Sigma_X V_X^T \Wo V_Y \Sigma_Y)$ are exactly the same. Hence, we prove the theorem with the assumption that $X$, $Y$ are orthonormal and that $\beta$ is the condition number of $\Wo$. 

We now present the proof for each of the three properties mentioned in Theorem~\ref{thm:general}. 
\begin{proof}[Proof of Property 1]
As mentioned above, wlog, we can assume that  both $X, Y$ are orthonormal matrices and  that the condition number of $R$ is same as condition number of $\Wo$. 

We first recall the definition of $S$:
$$S=\frac{n_1n_2}{m}\sum_{(i,j)\in \Omega}^m \x_i\xti \Uo\So\Vot \y_j\ytj=\frac{n_1n_2}{m}\sum_{(i,j)\in \Omega}^m Z_{ij},$$
where $Z_{ij}=\x_i\xti \Uo\So\Vot \y_j\ytj = X \e_i \eit X^T \Uo\So\Vot Y \e_j \ejt Y^T$, where $\e_i, \e_j$ denotes the $i$-th, $j$-th canonical basis vectors, respectively. 

Also, since $(i,j)$ is sampled uniformly at random from $[n_1]\times [n_2]$. Hence, $\E_{i}[\e_i\eit]=\frac{1}{n_1}I$ and $\E_{j}[\e_j\ejt]=\frac{1}{n_2}I$. That is, 
$$\E_{ij}[Z_{ij}]=\frac{1}{n_1n_2} XX^T\Uo\So\Vot YY^T=\Uo\So\Vot=\Wo/(n_1\cdot n_2),$$ where $XX^T=I$, $YY^T=I$ follows by orthonormality of both $X$ and $Y$. 

We now use the matrix concentration bound of Theorem~\ref{thm:tropp} to bound $\|S-\Wo\|_2$. To apply the bound of Theorem~\ref{thm:tropp}, we first need to bound the following two quantities: 
\begin{itemize}
\item {\bf Bound $\max_{ij} \|Z_{ij}\|_2$}: Now, $$\|Z_{ij}\|_2=\|\x_i\xti \Uo\So\Vot \y_j\ytj\|_2\leq \so^1 \|\x_i\|_2^2 \|\y_j\|_2^2\leq \frac{\so^1\mu^4d_1d_2}{n_1n_2},$$
where the last inequality follows using incoherence of $X, Y$. 
\item {\bf Bound $\|\sum_{(i,j)\in \Omega}E[Z_{ij}Z_{ij}^T]\|_2$ and  $\|\sum_{(i,j)\in \Omega}E[Z_{ij}^TZ_{ij}]\|_2$}: 

We first consider $\|\sum_{(i,j)\in \Omega}E[Z_{ij}Z_{ij}^T]\|_2$:
\begin{align}
&\left\|\sum_{(i,j)\in \Omega}E[Z_{ij}Z_{ij}^T]\right\|_2=\left\|\sum_{(i,j)\in \Omega}\E[\x_i\xti\Wo\yj\ytj\yj\ytj\Wo^T \x_i \xti]\right\|_2,\nonumber\\
&\stackrel{\zeta_1}{\leq} \frac{\mu^2 d_2}{n_2}\left\|\sum_{(i,j)\in \Omega}\E[\x_i\xti\Wo\yj\ytj\Wo^T \x_i \xti]\right\|_2\stackrel{\zeta_2}{=}\frac{\mu^2 d_2}{n_2^2}\left\|\sum_{(i,j)\in \Omega}\E[\x_i\xti\Wo\Wo^T \x_i \xti]\right\|_2,\nonumber\\
&\stackrel{\zeta_3}{\leq }\frac{(\so^1)^2\mu^4 d_1d_2}{n_1n_2^2}\left\|\sum_{(i,j)\in \Omega}\E[\x_i\xti]\right\|_2\stackrel{\zeta_4}{= }\frac{(\so^1)^2\mu^4 d_1d_2}{n_1^2n_2^2}\cdot m,\label{eq:imc_var}
\end{align}
where $\zeta_1$, $\zeta_3$ follows by using incoherent of $X, Y$ and $\|\Wo\|_2\leq \so^1$. $\zeta_2, \zeta_4$ follows by using $\E_{i}[\e_i\eit]=\frac{1}{n_1}I$ and $\E_{j}[\e_j\ejt]=\frac{1}{n_2}I$. 

Now, bound for $\|\sum_{(i,j)\in \Omega}E[Z_{ij}^TZ_{ij}]\|_2$ also turns out to be exactly the same and can be easily computed using exactly same arguments as above. 
\end{itemize}
Now, by applying Theorem~\ref{thm:tropp} and using the above computed bounds we get: 
\begin{equation}
  \label{eq:imcprz}
  Pr(\|S-\Wo\|_2\geq \so^1\gamma)\leq 2(d_1+d_2)\exp\left(-\frac{m\gamma^2}{\mu^4 d_1d_2(1+\gamma/3)}\right). 
\end{equation}
That is, w.p. $\geq 1-\gamma$: 
\begin{equation}
  \label{eq:imcprz1}
  \|S-\Wo\|_2 \leq \frac{\so^1\mu^2\sqrt{d_1d_2\log(2(d_1+d_2)/\gamma)}}{\sqrt{m}}. 
\end{equation}
Hence, by selecting $m=\Omega(\mu^4k^3\cdot \beta^2\cdot d_1d_2\log(2(d_1+d_2)/\gamma))$ where $\beta=\so^1/\so^k$, the following holds w.p. $\geq 1-\gamma$: 
$$\|S-\Wo\|_2 \leq \|\Wo\|_2\cdot \delta, $$
where $\delta= 1/(k^{3/2}\cdot \beta\cdot 100)$. 
\end{proof}
\begin{proof}[Proof of Property 2]
We prove the property for $B_y$; proof for $B_y$ follows analogously. Now, let $B_y=\frac{n_1n_2}{m}\sum_{(i,j)\in \Omega} Z_{ij}$ where $Z_i=\xit\u\u^T\x_i\yi\yti$. Then, 
\begin{align}
  \E[B_y]=\frac{n_1n_2}{m}\sum_{(i,j)\in \Omega} Z_{ij}=\frac{n_1n_2}{m}\sum_{i=1}^m\E_{(i,j)\in \Omega}[\xit\u\u^T\x_i\yi\yti]=I. 
\end{align}
Here again, we apply Theorem~\ref{thm:tropp} to bound $\|B_y-I\|_2$. To this end, we need to bound the following quantities: 
\begin{itemize}
\item {\bf Bound $\max_{ij} \|Z_{ij}\|_2$}: Now, $$\|Z_{ij}\|_2=\|\xit\u\u^T\x_i\yi\yti\|_2\leq \|\yi\|_2^2\|\x_i\|_2^2\leq \frac{\mu^4 d_1d_2}{n_1n_2}.$$
\item {\bf Bound $\|\sum_{(i,j)\in \Omega}E[Z_{ij}Z_{ij}^T]\|_2$ and  $\|\sum_{(i,j)\in \Omega}E[Z_{ij}^TZ_{ij}]\|_2$}: 

We first consider $\|\sum_{(i,j)\in \Omega}E[Z_{ij}Z_{ij}^T]\|_2$:
\begin{align}
\hspace*{-20pt}\left\|\sum_{(i,j)\in \Omega}\E[Z_{ij}Z_{ij}^T]\right\|_2&=\left\|\sum_{(i,j)\in \Omega}\E[(\xit\u\u^T\x_i)^2\|\y_i\|_2^2 \yi\yti]\right\|_2\stackrel{\zeta_1}{\leq}\frac{\mu^2d_2}{n_2}\left\|\sum_{(i,j)\in \Omega}\E[(\xit\u\u^T\x_i)^2\yi\yti]\right\|_2,\nonumber\\
&\stackrel{\zeta_2}{=}\frac{\mu^2d_2}{n_2^2}\left\|\sum_{(i,j)\in \Omega}\E[(\xit\u\u^T\x_i)^2\right\|_2\stackrel{\zeta_3}{\leq}\frac{\mu^4d_1d_2}{n_1n_2^2}\left\|\sum_{(i,j)\in \Omega}\E[(\xit\u)^2]\right\|_2\stackrel{\zeta_4}{=}\frac{\mu^4d_1d_2}{n_1^2n_2^2}\cdot m.
\end{align}
\end{itemize}
Note that the above given bounds that we obtain are exactly the same as the ones obtained in the Initialization Property's proof. Hence, by applying Theorem~\ref{thm:tropp} in a similar manner, and selecting $m=\Omega(k^3\beta^2 d_1\cdot d_2 \log(1/\gamma))$ and $\delta= 1/(k^{3/2}\cdot \beta\cdot 100)$, we get w.p. $\geq 1-\gamma$: 
$$\|B_y-I\|_2 \leq \delta.$$
Hence Proved. $\|B_x-I\|_2 \leq \delta$ can be proved similarly. 
\end{proof}
\begin{proof}[Proof of Property 3]
Note that $\E[C_y]=\E[\sum_{(i,j)\in \Omega}Z_{ij}]=0$.\\ Furthermore, both $\|Z_{ij}\|_2$ and $\|\E[\sum_{(i,j)\in \Omega} Z_{ij}Z_{ij}^T]\|_2$ have exactly the same bounds as those given in the Property 2's proof above. Hence, we obtain similar bounds. That is, if $m=\Omega(k^3\beta^2 d_1\cdot d_2 \log(1/\gamma))$ and $\delta= 1/(k^{3/2}\cdot \beta\cdot 100)$, we get w.p. $\geq 1-\gamma$: 
$$\|C_y\|_2 \leq \delta.$$
Hence Proved. $\|C_x\|_2$ can also be bounded analogously. 
\end{proof}

\end{proof}



\section{Multi-label Learning}\label{sec:mult}
In this section, we study the problem of multi-label regression with missing values. Let $X=[\x_1 \dots \x_{n_1}]\in \R^{d_1\times n_1}$ be the training matrix where  $\x_i$ is the feature vector of the $i$-th data point. Also, let $R\in \mathbb{R}^{n_1\times L}$ be the corresponding matrix of target variables. That is, $R^i=[R_{i1} \dots R_{ij} \dots R_{iL}]$ denotes $L$ target variables for $\x_i$. The goal is to learn a (low-rank) parameter matrix $\Wo$ s.t. $\tp{X}\Wo=R$.

The above problem is a straightforward multi-variate linear regression problem. However, in several large-scale multi-label learning problems, it is impossible to obtain all the target variables for each of the points. That is, $R$ generally has several entries missing. The goal is to learn $\Wo$ exactly, even when only a small number of random entries of $R$ is available. 

Here again, we view the problem as a low-rank matrix estimation problem with rank-one measurements $R_{ij}=\tp{\e}_i\tp{X}\Wo\e_j, (i,j)\in \Omega$, where index $\Omega$ is a uniformly sampled subset of $[n_1]\times [L]$. Note that this problem is a combination of the inductive matrix completion problem we studied in the previous section and the standard matrix completion. The left hand side measurement vector $X\e_i$ is similar to inductive matrix completion while the right hand measurement vector $\e_j$ is a standard matrix completion type of measurement vector. That is, this problem assumes the labels to be ``fixed'' but is inductive w.r.t. the data points $\x$. 

Similar to the previous section, we show that under a certain  incoherence assumption on the feature matrix $X$, Properties 1, 2, 3, required by Theorem~\ref{thm:general} are satisfied and hence alternating minimization will be able to learn the global optima $\Wo$. 
\begin{lemma}\label{lemma:mult}
Let  $X\in \mathbb{R}^{d_1\times n_1}$ be $\mu$-incoherent. Let $R=\tp{X}\Wo \in \mathbb{R}^{n_1\times L}$ be the ``labels'' matrix. 
Let $\Omega$ be a uniformly random subset of $[n_1]\times [L]$, s.t., $|\Omega|=m\geq C  \beta^2\cdot d_1 n_2\cdot \log(d_1+n_2)$, where $\beta=\sigma_R^1/\sigma_R^k$ is the condition number of $R$. Then, w.p. $\geq 1-1/(d_1+L)^{100}$, the measurement operators $A_{ij}=\sqrt{n_1n_2}\x_i \tp{\e}_j$ satisfy\footnote{We multiply $\x_i, \y_j$ by $\sqrt{n_1}, \sqrt{n_2}$ for normalization so that $\E_{i}[n_1\x_i\x_i^T]=I$ and $\E_{j}[n_2\yj\ytj]=I$} Properties 1,2,3 required by Theorem~\ref{thm:general}. 
\end{lemma}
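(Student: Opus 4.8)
The plan is to reuse the argument of Lemma~\ref{lemma:indmc} almost verbatim, exploiting that this operator is a \emph{hybrid}: the left measurement direction $\x_i$ behaves exactly as in inductive matrix completion, while the right direction $\e_j$ behaves as an ordinary matrix-completion measurement. As a preliminary step, exactly as in Lemma~\ref{lemma:indmc}, I would reduce to the case where $X$ is orthonormal: writing $X^T=U_X\Sigma_X V_X^T$ gives $R=X^T\Wo=U_X(\Sigma_X V_X^T\Wo)$, so we may replace $X$ by $U_X$ and $\Wo$ by $\Sigma_X V_X^T\Wo$; since $U_X$ has orthonormal columns this preserves the spectrum of $R$, so $\beta$ stays the condition number of the redefined $\Wo$, and the original parameter is recovered via $W_H\leftarrow V_X\Sigma_X^{-1}W_H$. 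After this reduction $\E_i[\x_i\xti]=\tfrac1{n_1}I$ with $\|\x_i\|_2^2\le\mu^2 d_1/n_1$ (by $\mu$-incoherence of $X$, Definition~\ref{defn:incoherence}), while $\E_j[\e_j\ejt]=\tfrac1L I$ and $\|\e_j\|_2=1$. It then remains to verify Properties 1, 2, 3 of Theorem~\ref{thm:general} for the operator $A_{ij}=\sqrt{n_1 L}\,\x_i\ejt$, $(i,j)\in\Omega$, with $\delta=1/(100 k^{3/2}\beta)$; each is a matrix-concentration statement that I would establish with the matrix Bernstein inequality (Theorem~\ref{thm:tropp}), using the partitioning in Algorithm~\ref{alg:altmin} to keep the intermediate unit vectors $\u,\v,\u_\perp,\v_\perp$ independent of the $\x_i$ of the current batch.

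For Property~1, write $S=\tfrac1m\sum_{(i,j)\in\Omega}b_{ij}A_{ij}=\tfrac{n_1 L}{m}\sum_{(i,j)\in\Omega}Z_{ij}$ with $Z_{ij}=\x_i\xti\Wo\e_j\ejt$; orthonormality of $X$ and $\E_j[\e_j\ejt]=\tfrac1L I$ give $\E[S]=\Wo$. Each summand obeys $\|Z_{ij}\|_2=n_1 L\,\|\x_i\|_2^2\,\|\Wo\|_2\,\|\e_j\|_2^2\le n_1 L\cdot\tfrac{\mu^2 d_1}{n_1}\cdot\so^1\cdot 1=\mu^2 d_1 L\,\so^1$, and the matrix-variance parameters $\|\sum\E[Z_{ij}Z_{ij}^T]\|_2$, $\|\sum\E[Z_{ij}^TZ_{ij}]\|_2$ are bounded by exactly the manipulation in Lemma~\ref{lemma:indmc} using $\E_j[\e_j\ejt]=\tfrac1L I$ on the right and incoherence of $X$ on the left; the only change from that proof is that the factor $\mu^2 d_2/n_2$ coming from $\|\y_j\|_2^2$ there becomes $1=\|\e_j\|_2^2$ here. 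Feeding these into Theorem~\ref{thm:tropp} yields $\|S-\Wo\|_2\le\so^1\cdot O\!\big(\mu\sqrt{d_1 L\log(d_1+L)/m}\big)$ w.h.p., which is $\le\|\Wo\|_2\delta$ once $m=\Omega(\mu^2 k^3\beta^2 d_1 L\log(d_1+L))$. Properties~2 and~3 follow by applying the identical recipe to the rank-one-summand operators $B_x,B_y$ and $G_x,G_y$: each equals $\tfrac{n_1 L}{m}\sum_{(i,j)\in\Omega}Z_{ij}$ with, e.g., $Z_{ij}=(\xti\u)^2\e_j\ejt$ for $B_y$ and $Z_{ij}=(\xti\u)(\xti\u_\perp)\e_j\ejt$ for $G_y$, so that $\E[B_x]=\E[B_y]=I$ and $\E[G_x]=\E[G_y]=0$ (the latter because $\tp{\u}\u_\perp=0$). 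Here $\|Z_{ij}\|_2\le n_1 L\,\|\x_i\|_2^2\le\mu^2 d_1 L$, and the matrix-variance computation needs the fourth-moment bound $\E_i[(\xti\u)^4]\le\|\x_i\|_2^2\,\E_i[(\xti\u)^2]\le\mu^2 d_1/n_1^2$, which uses incoherence of $X$; these are precisely the quantities appearing in the Property~2/3 parts of Lemma~\ref{lemma:indmc} with $\mu^2 d_2/n_2$ replaced by $1$, so the same choice of $m$ and $\delta$ gives $\|B_x-I\|_2,\|B_y-I\|_2,\|G_x\|_2,\|G_y\|_2\le\delta$ w.h.p. A union bound over the three properties and over the $2H+1$ independent batches $(\calA^h,\b^h)$ completes the proof.

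The only genuine departure from Lemma~\ref{lemma:indmc} is the asymmetry of the two measurement directions, and that is where the argument needs care: I must check that replacing the incoherent vector $\y_j$ (with $\|\y_j\|_2\le\mu\sqrt{d_2/n_2}$) by a canonical vector $\e_j$ (with $\|\e_j\|_2=1$, drawn uniformly from the $L$ orthonormal directions) still makes every expectation equal $I$ or $0$ and keeps the summand-norm and matrix-variance bounds at the scale $\poly(\mu)\,d_1 L$ rather than $d_1 L^2$. In particular this verifies that \emph{no} incoherence assumption on the label space (the right singular vectors of $\Wo$) is needed, because the measurement directions $\e_j$ are themselves maximally spread out. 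The one estimate that genuinely leans on $\mu$-incoherence of $X$ in a slightly non-obvious way is the fourth-moment bound $\E_i[(\xti\u)^4]\le\|\x_i\|_2^2\,\E_i[(\xti\u)^2]$ used for the variance terms of Properties~2 and~3; everything else is bookkeeping identical to Lemma~\ref{lemma:indmc}.
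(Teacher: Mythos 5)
Your proposal is correct and follows essentially the same route as the paper: reduce to orthonormal $X$ exactly as in Lemma~\ref{lemma:indmc}, then verify Properties 1, 2, 3 by applying the matrix Bernstein bound (Theorem~\ref{thm:tropp}) to the rank-one summands, with the right-hand direction $\e_j$ contributing $\|\e_j\|_2=1$ and $\E_j[\e_j\ejt]=\tfrac1L I$ in place of the incoherence factor $\mu^2 d_2/n_2$, yielding the same sample complexity $m=\Omega(\mu^2 k^3\beta^2 d_1 L\log(d_1+L))$ and $\delta=1/(100k^{3/2}\beta)$. The norm and variance bounds you compute (including the fourth-moment estimate for Properties 2 and 3) match the paper's, up to where the $\sqrt{n_1 L}$ normalization is absorbed.
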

Assuming $\beta, k$ to be constant and by ignoring log factors, the above lemma shows that using $m=d_1 \cdot L$ samples the parameter matrix $\Wo$ can be recovered exactly. In contrast, matrix completion requires  $m=n_1+L$ samples. That is, if the number of training points is significantly larger than $d_1\cdot L$, then the above method improves upon the matrix completion approach significantly. This result can be interpreted in another way: for  missing labels a standard method is to first do matrix completion and then learn $\Wo$. Our above lemma gives an example of a setting where  simultaneous learning and  completion of  $R$ leads to significantly better sample complexity. 

We now provide a proof of the above lemma. 
\begin{proof}
Here again, we divide the proof into three parts where each part proves a property mentioned in Theorem~\ref{thm:general}. 
\begin{proof}[Proof of Property 1]
As mentioned in the proof  of Lemma~\ref{lemma:indmc}, wlog, we can assume that  both $X, Y$ are orthonormal matrices and  that the condition number of $R$ is same as condition number of $\Wo$. 

We first recall the definition of $S$:
$$S=\frac{n_1n_2}{m}\sum_{(i,j)\in \Omega}^m \x_i\xti \Uo\So\Vot \e_j\etj=\frac{n_1n_2}{m}\sum_{(i,j)\in \Omega}^m Z_{ij},$$
where $Z_{ij}=\x_i\xti \Uo\So\Vot \e_j\etj = X \e_i \eit X^T \Uo\So\Vot \e_j \ejt $, where $\e_i, \e_j$ denotes the $i$-th, $j$-th canonical basis vectors, respectively.

Now using the fact that $(i,j)$ is sampled uniformly at random from $[n_1]\times [n_2]$: 
$$\E_{ij}[Z_{ij}]=\frac{1}{n_1n_2} XX^T\Uo\So\Vot=\Uo\So\Vot=\Wo/(n_1\cdot n_2),$$ where $XX^T=I$ follows by orthonormality of both $X$ and $Y$. 

As in the previous section, we first bound the following two quantities:
\begin{itemize}
\item {\bf Bound $\max_{ij} \|Z_{ij}\|_2$}: Now, $$\|Z_{ij}\|_2=\|\x_i\xti \Uo\So\Vot \e_j\etj\|_2\leq \so^1 \|\x_i\|_2^2 \leq \frac{\so^1\mu^2d_1}{n_1},$$
where the last inequality follows using incoherence of $X$ and $\Vo$.  
\item {\bf Bound $\|\sum_{(i,j)\in \Omega}E[Z_{ij}Z_{ij}^T]\|_2$}: 
\begin{align}
&\left\|\sum_{(i,j)\in \Omega}E[Z_{ij}Z_{ij}^T]\right\|_2=\left\|\sum_{(i,j)\in \Omega}\E[\x_i\xti\Wo\ej\etj\ej\etj\Wo^T \x_i \xti]\right\|_2,\nonumber\\
&\stackrel{\zeta_1}{=} \frac{1}{n_2} \left\|\sum_{(i,j)\in \Omega}\E[\x_i\xti\Wo\Wo^T\x_i \xti]\right\|_2\stackrel{\zeta_2}{\leq} \frac{(\so^1)^2\mu^2d_1}{n_1n_2}\left\|\sum_{(i,j)\in \Omega}\E[\x_i \xti]\right\|_2\stackrel{\zeta_3}{=} \frac{(\so^1)^2\mu^2d_1}{n_1^2n_2}\cdot m,\label{eq:mult_zub}
\end{align}
where $\zeta_1$ follows from $\E_j[\e_j\etj]=\frac{1}{n_2}I$, $\ \zeta_2$ follows from incoherence of $\x_i$, and $\zeta_3$ follows from $\E_i[\x_i\xti]=\frac{1}{n_1}I$.
\item {Bound $\|\sum_{(i,j)\in \Omega}E[Z_{ij}^TZ_{ij}]\|_2$}: 
\begin{align}
&\left\|\sum_{(i,j)\in \Omega}E[Z_{ij}^TZ_{ij}]\right\|_2=\left\|\sum_{(i,j)\in \Omega}\E[\ej\etj\Wot\x_i\xti\x_i\xti\Wo\ej\etj]\right\|_2,\nonumber\\
&\stackrel{\zeta_1}{\leq } \frac{\mu^2d_1}{n_1} \left\|\sum_{(i,j)\in \Omega}\E[\ej\etj\Wot\x_i\xti\Wo\ej\etj]\right\|_2\stackrel{\zeta_2}{=} \frac{\mu^2d_1}{n_1^2}\left\|\sum_{(i,j)\in \Omega}\E[\ej\etj\Wot\Wo\ej\etj]\right\|_2,\nonumber\\
&\stackrel{\zeta_3}{\leq} \frac{(\so^1)^2\mu^2d_1}{n_1^2n_2}\cdot m,\label{eq:mult_zub1}
\end{align}
where $\zeta_1$ follows from incoherence of $X$, $\zeta_2, \zeta_3$ follows from uniform sampling of $\e_i$ and $\e_j$, respectively. 
\end{itemize}
Using  \eqref{eq:mult_zub}, \eqref{eq:mult_zub1} we get: $$\max\left(\left\|\sum_{(i,j)\in \Omega}E[Z_{ij}Z_{ij}^T]\right\|_2,\ \left\|\sum_{(i,j)\in \Omega}E[Z_{ij}^TZ_{ij}]\right\|_2\right)\leq \frac{(\so^1)^2\mu^2d_1}{n_1^2n_2}\cdot m.$$

\noindent Using the above given bounds, and Theorem~\ref{thm:tropp}, we get: 
\begin{equation}
  \label{eq:multprz}
  Pr(\|S-\Wo\|_2\geq \frac{n_2\so^1\gamma}{\mu\sqrt{k}})\leq 2(d_1+n_2)\exp\left(-\frac{m\gamma^2}{\mu^4\cdot k\cdot d_1(1+\gamma/3)}\right). 
\end{equation}
That is, by selecting $m=\Omega(k^3 \beta^2 \mu^2 d_1n_2\log(2(d_1+n_2)/\gamma)$ with $\beta=\frac{\so^1}{\so^k}$, the following holds w.p. $\geq 1-\gamma$: 
$$\|S-\Wo\|\leq \delta\|\Wo\|_2,$$ 
where $\delta\leq \frac{1}{k^{3/2}\cdot \beta \cdot 100} $. 
\end{proof}
\begin{proof}[Proof of Property 2]
Here, we first prove the property for $B_y$. Now, $B_y=\frac{n_1n_2}{m}\sum_{(i,j)\in \Omega} Z_{ij}$ where $Z_i=\xit\u\u^T\x_i\ej\etj$. Note that, $\E[B_y]=I$. 

Next, we bound the quantities required by Theorem~\ref{thm:tropp}: 
\begin{itemize}
\item {\bf Bound $\max_{ij} \|Z_{ij}\|_2$}: Now, $$\|Z_{ij}\|_2=\|\xit\u\u^T\x_i \e_j\etj\|_2\leq \|\x_i\|_2^2 \leq \frac{\mu^2d_1}{n_1},$$
where the second inequality follows from incoherence of $X$. 
\item {\bf Bound $\|\sum_{(i,j)\in \Omega}E[Z_{ij}Z_{ij}^T]\|_2$}: 
\begin{align*}
\hspace*{-20pt}\left\|\sum_{(i,j)\in \Omega}E[Z_{ij}Z_{ij}^T]\right\|_2=\left\|\sum_{(i,j)\in \Omega}\E[(\xit\u\u^T\x_i)^2 \e_j\etj]\right\|_2\stackrel{\zeta_1}{=}\frac{1}{n_2}\sum_{(i,j)\in \Omega}\E[(\xit\u\u^T\x_i)^2]\stackrel{\zeta_2}{\leq} \frac{\mu^2 d_1}{n_1^2n_2},
\end{align*}
where $\zeta_1$ follows as $\e_j$ is sampled uniformly and $\zeta_2$ follows by using incoherence of $X$ and uniform sampling of $\e_i$. 

Hence, using $m=\Omega(k^3 \cdot \beta^2 \cdot d \cdot n_2 \log(2(d_1+n_2)/\gamma)$, then we have (w.p. $\geq 1-\gamma$):
$$\|B_y-I\|_2\leq \delta,$$
where $\delta=1/(k^{3/2}\cdot \beta\cdot 100)$.
\end{itemize}
Now, we bound $B_x=\frac{n_1n_2}{m}\sum_{(i,j)\in \Omega} Z_{ij}$ where $Z_i=\etj\v\v^T\ej\x_i\xit$. Note that, $\E[B_y]=I$. 
Next, we bound the quantities required by Theorem~\ref{thm:tropp}: 
\begin{itemize}
\item {\bf Bound $\max_{ij} \|Z_{ij}\|_2$}: Now, $$\|Z_{ij}\|_2=\|\etj\v\v^T\ej\x_i\xit\|_2\leq \|\x_i\|_2^2 \leq \frac{\mu^2d_1}{n_1},$$
where the second inequality follows from incoherence of $X$. 
\item {\bf Bound $\|\sum_{(i,j)\in \Omega}E[Z_{ij}Z_{ij}^T]\|_2$}: 
\begin{align}
\left\|\sum_{(i,j)\in \Omega}E[Z_{ij}^TZ_{ij}]\right\|_2&=\left\|\sum_{(i,j)\in \Omega}\E[(\etj\v\v^T\ej)^2\|\x_i\|^2\x_i\xit]\right\|_2,\nonumber\\
&\stackrel{\zeta_1}{\leq }\frac{1}{n_2}\sum_{(i,j)\in \Omega}\E[\|\x_i\|^2\x_i\xit],\nonumber\\
&\stackrel{\zeta_2}{\leq} \frac{\mu^2 d_1}{n_1^2n_2},
\end{align}
where $\zeta_1$ follows as $\e_j$ is sampled uniformly and $\zeta_2$ follows by using incoherence of $X$ and uniform sampling of $\e_i$. 

Hence, using $m=\Omega(k^3 \cdot \beta^2 \cdot d \cdot n_2 \log(2(d_1+n_2)/\gamma)$,  we have (w.p. $\geq 1-\gamma$):
$$\|B_x-I\|_2\leq \delta,$$
where $\delta=1/(k^{3/2}\cdot \beta\cdot 100)$.
\end{itemize}
\end{proof}
\begin{proof}[Proof of Property 3]
We first note that $\E[C_x]=\E[C_y]=0$. Now, here again we use Theorem~\ref{thm:tropp} to say that $C_x, C_y$  converge to their mean. The quantities we need to bound are similar to the ones proved above for Property 2. Hence, the Property 3 follows using  $m=\Omega(k^3 \cdot \beta^2 \cdot d \cdot n_2 \log(2(d_1+n_2)/\gamma)$ samples. 
\end{proof}
\end{proof}

\section{Experiments}\label{sec:exps}
\begin{figure}[t]
  \centering
  \begin{tabular}[t]{cccc}
    \includegraphics[width=.25\textwidth]{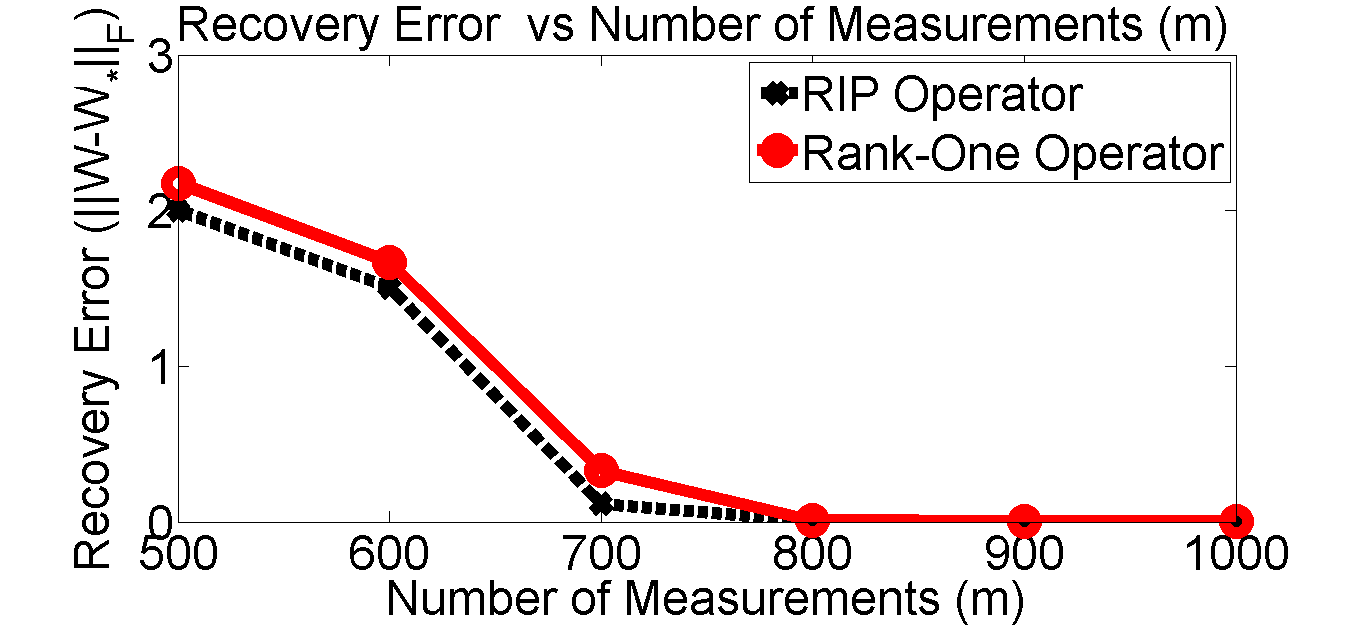}&\hspace*{-15pt}
    \includegraphics[width=.25\textwidth]{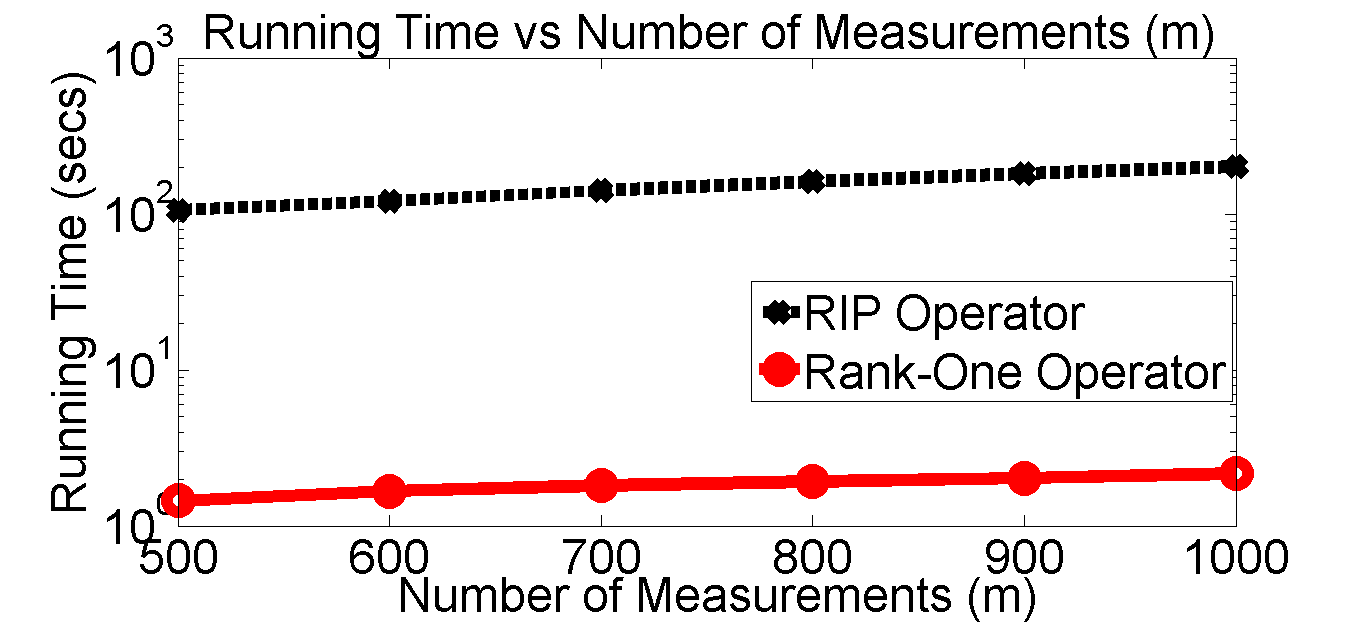}&\hspace*{-15pt}
    \includegraphics[width=.25\textwidth]{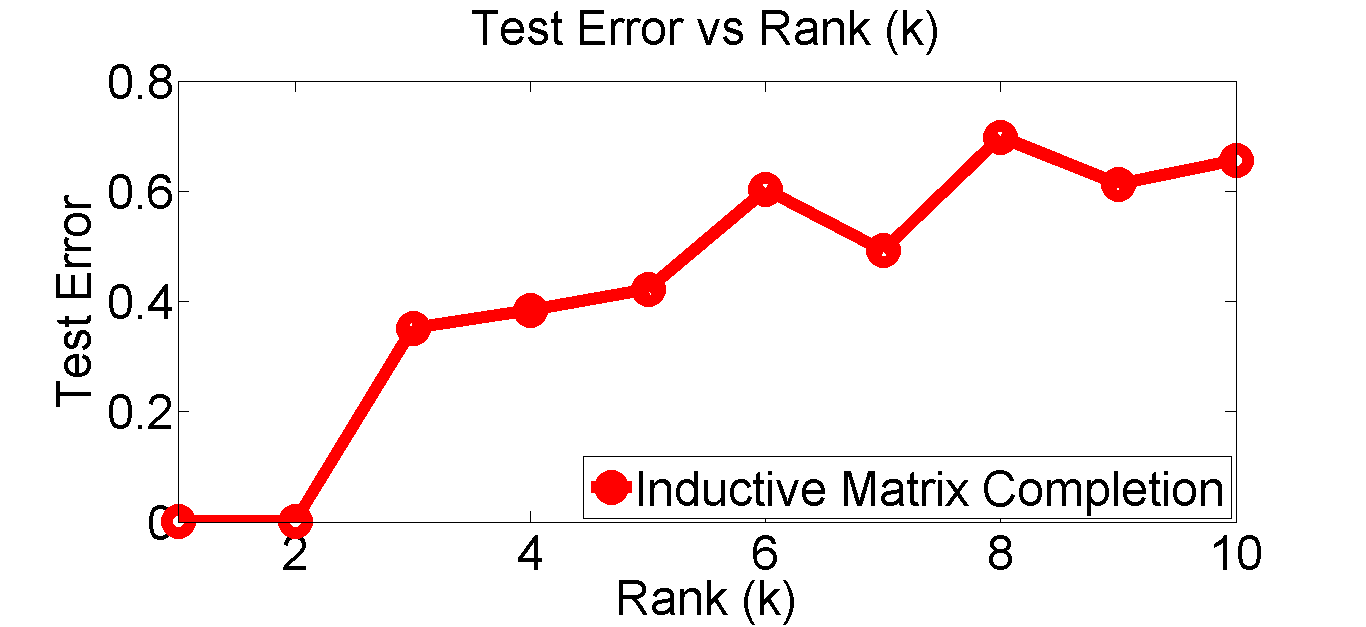}&\hspace*{-15pt}
    \includegraphics[width=.25\textwidth]{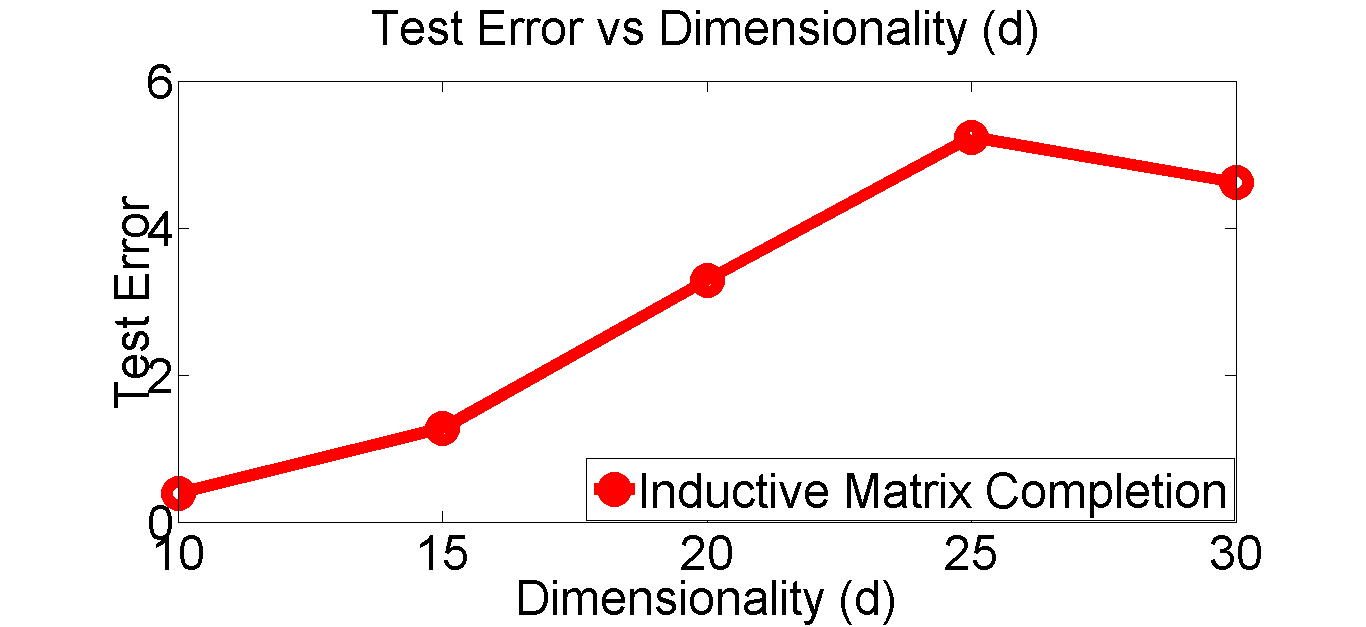}\\
{\bf (a)}&{\bf (b)}&{\bf (c)}&{\bf (d)}
  \end{tabular}
\caption{{\bf (a), (b)}: Low-rank Matrix Sensing---Comparison of RIP based and the rank-one matrices based measurement operators for  low-rank matrix sensing. Clearly, our rank-one operator is significantly faster than the RIP based method while incurring similar recovery error. {\bf (c), (d)}: Inductive Matrix Completion---plots show the error incurred by  alternating minimization  on the test data with, (c): varying rank of the underlying $\Wo$,  and (d): varying dimensionality  of $\Wo$.}\label{fig:exp}
\end{figure}
In this section, we first demonstrate empirically that our Gaussian rank-one linear operator ($\calA_{Gauss}$) is significantly more efficient for matrix sensing than the existing RIP based measurement operators. To this end, we first generated a random rank-$5$ signal $\Wo \in \mathbb{R}^{50\times 50}$ and then generate different number of measurements using both $\calA_{Gauss}$  and an RIP based operator. We run alternating minimization method for both type of measurements. Figure~\ref{fig:exp} (a) compares the Frobenius norm in recovery by both the methods. Figure~\ref{fig:exp} (b) plots (on log-scale) the running time of both the methods as $m$ increases. Clearly, the $\calA_{Gauss}$ operator based measurements provide reasonably accurate recovery while the running time of our $\calA_{Gauss}$ based method is about two orders of magnitude better than that of RIP based measurement method.

Next, we demonstrate that by using a very small number of measurements, the multi-label regression problem can still be solved accurately. For this, we selected number of labels $L=50$, number of points $n_1=100$, and varied $d$ from $1$ to $20$. We then generated $100$ training points $X\in \mathbb{R}^{d_1\times 100}$ and $100$ test points. We then generated $\Wo\in \mathbb{R}^{d_1\times L}$ and observed only $200$ random entries of $R=X^T\Wo$.  Figure~\ref{fig:exp} (c), (d) plot the error incurred in prediction over the test set, as $k$ and $d$ vary respectively. The error is computed using $\sum_{\x\in Test Set}|R_{xj}-\x^T\Wo\e_j|^2$. Clearly, the method is able to output fairly accurate predictions for small $k, d$. Moreover, the test error degrades gracefully as either $k$ or $d$ increases. 

\clearpage
\newpage
\begin{small}
\bibliographystyle{unsrt}
\bibliography{refs}
\end{small}
\clearpage
\newpage
\appendix
\section{Preliminaries}\label{app:prelim}
\begin{theorem}[Theorem 1.6 of \cite{Tropp11}]\label{thm:tropp}
Consider a finite sequence $Z_i$ of independent, random matrices with dimensions $d_1\times d_2$. Assume that each random matrix satisfies $\mathbb{E}[Z_i] = 0$ and $\|Z_i\|_2\leq R$ almost surely. Define, $\sigma^2 := \max \{\|\sum_i\mathbb{E}[Z_iZ_i^T]\|_2,\|\sum_i\mathbb{E}[Z_i^TZ_i]\|_2\}.$ Then, for all  $\gamma\geq 0$, $$\mathbb{P}\left(\left\|\frac{1}{m}\sum_{i=1}^mZ_i\right\|_2\geq \gamma\right)\leq (d_1+d_2)\exp\left(\frac{-m^2\gamma^2}{\sigma^2+Rm\gamma/3}\right).$$
\end{theorem}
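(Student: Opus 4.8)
\textbf{Proof proposal for Theorem~\ref{thm:tropp}.} This is the matrix Bernstein inequality (Theorem~1.6 of~\cite{Tropp11} as cited), so the plan is to reproduce its proof via the matrix Laplace-transform method. The first step is a reduction from rectangular to Hermitian matrices: replace each $Z_i$ by its Hermitian dilation
$$
X_i \;=\; \begin{bmatrix} 0 & Z_i \\ \tp{Z_i} & 0 \end{bmatrix}\ \in\ \R^{(d_1+d_2)\times (d_1+d_2)}.
$$
Dilation is linear and isometric ($\|X_i\|_2=\|Z_i\|_2\le R$), preserves the zero mean, and $X_i^2$ is block-diagonal with diagonal blocks $Z_i\tp{Z_i}$ and $\tp{Z_i}Z_i$, so that $\bigl\|\sum_i\mathbb{E}[X_i^2]\bigr\|_2=\max\bigl\{\|\sum_i\mathbb{E}[Z_i\tp{Z_i}]\|_2,\ \|\sum_i\mathbb{E}[\tp{Z_i}Z_i]\|_2\bigr\}=\sigma^2$. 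Since the spectrum of the dilation of any matrix $W$ is $\{\pm\sigma_j(W)\}_j$, the top eigenvalue of the dilation of $\tfrac1m\sum_i Z_i$ equals $\|\tfrac1m\sum_i Z_i\|_2$; it therefore suffices to bound $\mathbb{P}\bigl[\lambda_{\max}(\tfrac1m\sum_i X_i)\ge \gamma\bigr]$ for centered Hermitian summands with $\|X_i\|_2\le R$.

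Next, for any $\theta>0$, Markov's inequality applied to $\Tr\exp$ (which dominates $\exp\circ\lambda_{\max}$ on Hermitian matrices) gives the master bound
$$
\mathbb{P}\Bigl[\lambda_{\max}\bigl({\textstyle\sum_i} X_i\bigr)\ge t\Bigr]\ \le\ e^{-\theta t}\;\mathbb{E}\,\Tr\exp\bigl(\theta{\textstyle\sum_i} X_i\bigr).
$$
The key analytic input is Lieb's concavity theorem (the map $A\mapsto \Tr\exp(H+\log A)$ is concave for $A\succ 0$): combined with Jensen's inequality and independence of the $X_i$, it yields subadditivity of the matrix cumulant generating function, $\mathbb{E}\,\Tr\exp(\theta\sum_i X_i)\le \Tr\exp\bigl(\sum_i\log\mathbb{E}\,e^{\theta X_i}\bigr)$. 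Each summand's cgf is controlled by spectral calculus: the scalar map $x\mapsto(e^x-1-x)/x^2$ is increasing, so $\|X_i\|_2\le R$ forces $\mathbb{E}\,e^{\theta X_i}\preceq I+g(\theta)\,\mathbb{E}[X_i^2]$ with $g(\theta)=(e^{\theta R}-1-\theta R)/R^2\le \tfrac{\theta^2/2}{1-\theta R/3}$ for $0<\theta<3/R$, and operator monotonicity of $\log$ upgrades this to $\log\mathbb{E}\,e^{\theta X_i}\preceq g(\theta)\,\mathbb{E}[X_i^2]$.

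Plugging this back, using monotonicity of $\Tr\exp$ in the Loewner order and $\|\sum_i\mathbb{E}[X_i^2]\|_2=\sigma^2$, gives $\mathbb{P}[\lambda_{\max}(\sum_i X_i)\ge t]\le (d_1+d_2)\exp(-\theta t+g(\theta)\sigma^2)$. Choosing $\theta=t/(\sigma^2+Rt/3)$ makes the exponent $-t^2/\bigl(2(\sigma^2+Rt/3)\bigr)$; setting $t=m\gamma$ and undoing the dilation yields $\mathbb{P}[\|\tfrac1m\sum_i Z_i\|_2\ge\gamma]\le (d_1+d_2)\exp\!\bigl(\tfrac{-m^2\gamma^2}{2(\sigma^2+Rm\gamma/3)}\bigr)$, which is the claimed bound up to the harmless factor $\tfrac12$ that the stated form drops (it follows from the cruder estimate $g(\theta)\le \theta^2/(1-\theta R/3)$). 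The crux --- and essentially the only non-elementary ingredient --- is that $e^{A+B}\ne e^Ae^B$ for non-commuting $A,B$, so the whole argument rests on Lieb's concavity theorem (equivalently a Golden--Thompson-type trace inequality) to restore subadditivity of the matrix cgf; granting that, the rest is the classical scalar Bernstein computation applied eigenvalue-by-eigenvalue together with the dilation bookkeeping.
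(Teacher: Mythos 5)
The paper itself offers no proof of this statement: it is imported verbatim as Theorem~1.6 of \cite{Tropp11}, so there is no internal argument to compare against. Your sketch is exactly the standard proof of that cited theorem (Hermitian dilation, the Markov/trace-exponential master bound, Lieb's concavity theorem giving subadditivity of the matrix cgf, the spectral estimate $\mathbb{E}\,e^{\theta X_i}\preceq I+g(\theta)\mathbb{E}[X_i^2]$ with $g(\theta)=(e^{\theta R}-1-\theta R)/R^2\le \frac{\theta^2/2}{1-\theta R/3}$, and the optimization $\theta=t/(\sigma^2+Rt/3)$), and each of those steps is correct; it yields the tail bound $(d_1+d_2)\exp\bigl(-\tfrac{m^2\gamma^2}{2(\sigma^2+Rm\gamma/3)}\bigr)$.

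The one genuine flaw is your final parenthetical reconciliation with the statement as printed. You claim the version without the factor $\tfrac12$ ``follows from the cruder estimate $g(\theta)\le\theta^2/(1-\theta R/3)$.'' That is backwards: enlarging the bound on $g$ can only weaken the resulting tail bound, never strengthen it, so no choice of $\theta$ recovers the exponent $-t^2/(\sigma^2+Rt/3)$ from that route. In fact the printed form (with the $\tfrac12$ dropped) is not a theorem at all: already for scalar Rademacher variables ($d_1=d_2=1$, $R=1$, $\sigma^2=m$) the central limit theorem gives $\mathbb{P}(|\sum_i Z_i|\ge c\sqrt{m})\approx \frac{2}{c\sqrt{2\pi}}e^{-c^2/2}$, which exceeds $2e^{-c^2}$ for large $c$. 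So the statement in the paper is a transcription slip from Tropp's Theorem~1.6 (the numerator should read $-m^2\gamma^2/2$), your derivation proves the correct version, and since the paper only invokes the bound up to unspecified constants in $m$, nothing downstream is affected; you should simply delete the parenthetical claim rather than try to match the misstated constant.
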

\section{Proof of General Theorem for Low-rank Matrix Estimation}\label{app:general}
Here, we now generalize our above given proof to the rank-$k$ case. In the case of rank-$1$ matrix recovery, we used $1-(\vtnt\uo)^2$ as the error or distance function and show at each step that the error decreases by at least a constant factor. For general rank-$k$ case, we need to generalize the distance function to be a distance over subspaces of dimension-$k$. To this end, we use the standard principle angle based subspace distance. That is,
\begin{defn}
  Let $U_1, U_2 \in \mathbb{R}^{d\times k}$ be $k$-dimensional subspaces. Then the principle angle based distance $\dist(U_1, U_2)$ between $U_1, U_2$ is given by: $$\dist(U_1, U_2)=\|U_\perp^TU_2\|_2,$$ where $U_\perp$ is the subspace orthogonal to $U_1$. 
\end{defn}
\begin{proof}[Proof of Theorem~\ref{thm:general}: General Rank-$k$ Case]
For simplicity of notation, we denote $U_h$ by $U$, $\widehat{V}_{h+1}$ by $\widehat{V}$, and $V_{h+1}$ by $V$. 

Similar to the above given proof, we first present the update equation for $\widehat{V}_{(t+1)}$. 
Recall that $\widehat{V}_{(t+1)}=\argmin_{V\in \R^{d_2\times k}}  \sum_i (\xit\Wo\yi-\xit\Ut \tp{\Vh}\yi)^2$. Hence, by setting gradient of this objective function to $0$, using the above given notation and by simplifications, we get: 
\begin{equation}\label{eq:vh}\Vh=\tp{\Wo}U-F,\end{equation}
where $F=[F_1 F_2 \dots F_k]$ is the ``error'' matrix. 

Before specifying $F$, we first introduce {\em block matrices} $B, C, D, S \in \mathbb{R}^{k d_2 \times k d_2}$ with $(p,q)$-th block $B_{pq}, C_{pq}, S_{pq}, D_{pq}$ given by: 
\begin{align}  \label{eq:Bpq}
  B_{pq}&=\sum_i \yi\yit (\xit \u_{p})(\xit \u_{q}),\\
  C_{pq}&=\sum_i \yi\yit (\xit \u_{p})(\xit \u_{*q}),\label{eq:Cpq}\\
D_{pq}&=\u_p^T\u_{*q}I,\label{eq:Dpq}\\
S_{pq}&=\so^p I \ \  \text{ if }p=q,\ \ \ \ \text{and }\ \ 0\ \  \text{ if }p\neq q.\label{eq:Spq}
\end{align}
where $\so^p=\So(p,p)$, i.e., the $p$-th singular value of $\Wo$ and $\u_{*q}$ is the $q$-th column of $\Uo$. 

Then, using the definitions given above, we get: 
\begin{equation}
  \label{eq:Fi}
  \left[\begin{matrix}F_1\\\vdots\\ F_k\end{matrix}\right]=B^{-1}(BD-C)S\cdot \mvec(\Vo).
\end{equation}

Now, recall that in the $t+1$-th iteration of Algorithm~\ref{alg:altmin}, $V_{t+1}$ is obtained by QR decomposition of $\widehat{V}_{t+1}$. Using notation mentioned above, $\Vh=V R$ where $R$ denotes the lower triangular matrix $R_{t+1}$ obtained by the QR decomposition of $V_{t+1}$. 

Now, using \eqref{eq:vh}, $V=\Vh R^{-1}=(\Wo^TU-F)R^{-1}.$ Multiplying both the sides by $\Vop$, where $\Vop$ is a fixed orthonormal basis of the subspace orthogonal to $span(\Vo)$, we get: 
\begin{equation}
  \label{eq:t2}
  (\Vop)^TV=-(\Vop)^TFR^{-1} \Rightarrow dist(\Vo, V_{t+1})=\|(\Vop)^TV\|_2 \leq \|F\|_2\|R^{-1}\|_2. 
\end{equation}
Also, note that using the initialization property (1) mentioned in Theorem~\ref{thm:general}, we get $\|S-\Wo\|_2\leq \frac{\so^k}{100}$. Now, using the standard sin theta theorem for singular vector perturbation\cite{Li94}, we get: $\dist(U_0, \Uo)\leq \frac{1}{100}$. 

Theorem now follows by using Lemma~\ref{lem:F}, Lemma~\ref{lem:R} along with the above mentioned bound on $\dist(U_0, \Uo)$. 



\end{proof}
\begin{lemma}
  \label{lem:F}
  Let $\calA$ be a rank-one measurement operator where $A_i=\x_i\tp{\y}_i$. Also, let $\calA$ satisfy Property 1, 2, 3 mentioned in Theorem~\ref{thm:general} and let $\so^1\geq \so^2\geq \dots \geq\so^k$ be the singular values of $\Wo$. Then, $$\|F\|_2\leq \frac{\so^k}{100}\dist(U_t, \Uo). $$
\end{lemma}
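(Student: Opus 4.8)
\textbf{Proof proposal for Lemma~\ref{lem:F}.}

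The plan is to bound $\|F\|_2$ directly from its explicit expression \eqref{eq:Fi}, namely $\mvec(F) = B^{-1}(BD-C)S\cdot \mvec(\Vo)$, by controlling each of the three factors $B^{-1}$, $BD-C$, and $S$ separately and then invoking the trivial bound $\|\mvec(\Vo)\|_2 = \|\Vo\|_F = \sqrt{k}$. Since $S$ is block-diagonal with blocks $\so^p I$, we have $\|S\|_2 = \so^1 = \beta\so^k$, so we will need the product $\|B^{-1}\|_2\|BD-C\|_2$ to be at most roughly $\frac{1}{100\,k^{1/2}\beta}\dist(U_t,\Uo)$ to conclude.

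First I would control $\|B^{-1}\|_2$. The block matrix $B$ has $(p,q)$-block $B_{pq}=\frac1m\sum_i (\xit\u_p)(\xit\u_q)\yi\yit$ (after the $1/m$ normalization absorbed into the definition), and $\mathbb{E}[B_{pq}] = (\u_p^T\u_q)I = \delta_{pq}I$ since $\u_1,\dots,\u_k$ are orthonormal columns of $U_t$; thus $\mathbb{E}[B]=I_{kd_2}$. Property 2 gives concentration for the diagonal blocks $B_{pp}$ (take $\v$ ranging over a net, or more precisely the quadratic form $w^T B w$ for unit $w$ decomposes into terms each controlled by $B_x$-type operators applied to the vectors $\u_p$), and a polarization identity $B_{pq} = \frac14\big(B_{x}[\u_p+\u_q] - B_x[\u_p-\u_q]\big)$-style argument together with Property 3 controls the off-diagonal blocks. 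Combining, $\|B-I\|_2 \le k\delta$ (the factor $k$ from summing over $k^2$ blocks, absorbed appropriately), hence $\sigma_{\min}(B)\ge 1-k\delta \ge 1/2$ and $\|B^{-1}\|_2\le 2$ given the choice $\delta = \frac{1}{100k^{3/2}\beta}$.

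Next, and this is the crux, I would bound $\|BD-C\|_2$. Writing $D_{pq} = (\u_p^T\u_{*q})I$ one checks that $(BD)_{pq} = \sum_r B_{pr}(\u_r^T\u_{*q}) = \frac1m\sum_i (\xit\u_p)\big(\xit U_t U_t^T \u_{*q}\big)\yi\yit$ whereas $C_{pq} = \frac1m\sum_i (\xit\u_p)(\xit\u_{*q})\yi\yit$, so $(BD-C)_{pq} = -\frac1m\sum_i (\xit\u_p)\big(\xit(I-U_tU_t^T)\u_{*q}\big)\yi\yit$. The key observation, exactly as at step $\zeta_1$ in the rank-$1$ proof, is that $\u_p$ lies in $\mathrm{span}(U_t)$ while $(I-U_tU_t^T)\u_{*q}$ is orthogonal to $\mathrm{span}(U_t)$, so each block is of the form $G_x$ from Property 3 applied to the orthonormal pair $\big(\u_p,\ (I-U_tU_t^T)\u_{*q}/\|(I-U_tU_t^T)\u_{*q}\|_2\big)$, scaled by $\|(I-U_tU_t^T)\u_{*q}\|_2 \le \dist(U_t,\Uo)$. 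Hence each block has norm $\le \delta\cdot\dist(U_t,\Uo)$, and summing over the $k^2$ blocks (or more carefully, bounding the block-operator norm by $k$ times the max block norm) gives $\|BD-C\|_2 \le k\delta\cdot\dist(U_t,\Uo)$. Putting the three pieces together, $\|F\|_2 \le \|B^{-1}\|_2\|BD-C\|_2\|S\|_2\sqrt{k} \le 2\cdot k\delta\dist(U_t,\Uo)\cdot\beta\so^k\cdot\sqrt{k} = 2k^{3/2}\beta\delta\so^k\dist(U_t,\Uo) \le \frac{\so^k}{50}\dist(U_t,\Uo)$, which gives the claimed bound (with constant slack).

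The main obstacle I anticipate is getting the block-matrix norm bounds from the \emph{vector} concentration statements in Properties 2 and 3: Property 2/3 are stated for a single pair of unit vectors $(\u,\v)$, and one must argue that this suffices to control the spectral norm of the $kd_2\times kd_2$ block operators $B$ and $BD-C$. The clean way is to expand the quadratic form $\mvec(W)^T(BD-C)\mvec(W)$ for $W=\sum_p \e_p \w_p^T$ with $\sum_p\|\w_p\|^2=1$ into $\sum_{p,q}\w_p^T (BD-C)_{pq}\w_q$, and bound each term using the $G_x$ property (the independence of $\u_p, \u_{*q}$ from the randomness in $\x_i,\y_i$ is exactly what Property 3 provides), then apply Cauchy–Schwarz over the $p,q$ indices to pay only a factor of $k$; the orthogonality fact noted above is what licenses applying $G_x$ rather than $B_x$ to the cross terms. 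I would be careful that the final $\delta$ is chosen so that the $k$-loss from this block-to-vector reduction is exactly compensated, which is why the theorem takes $\delta = \Theta(1/(k^{3/2}\beta))$ rather than $\Theta(1/(k^{1/2}\beta))$.
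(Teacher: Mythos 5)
Your proposal follows essentially the same route as the paper's proof: the factorization $\|F\|_2\le\|B^{-1}\|_2\|BD-C\|_2\|S\|_2\sqrt{k}$, Property 2 for the diagonal blocks and Property 3 for the off-diagonal blocks of $B$ giving $\sigma_{\min}(B)\ge 1-k\delta$, the identification $(BD-C)_{pq}=\frac1m\sum_i(\xit\u_p)\bigl(\u_{*q}^T(U_tU_t^T-I)\x_i\bigr)\yi\yit$ with the orthogonality $\u_{*q}^T(U_tU_t^T-I)\u_p=0$ licensing Property 3 with scale $\|(I-U_tU_t^T)\u_{*q}\|_2\le\dist(U_t,\Uo)$, and the same block-to-vector Cauchy--Schwarz reduction costing a factor $k$. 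The only (immaterial) deviations are your cruder $\|B^{-1}\|_2\le 2$ in place of the paper's $1/(1-k\delta)$, which loosens the constant to $\so^k/50$, and labeling the relevant operator $G_x$ where the paper's definitions make it $G_y$.
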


\begin{lemma}
  \label{lem:R}
  Let $\calA$ be a rank-one measurement operator where $A_i=\x_i\tp{\y}_i$. Also, let $\calA$ satisfy Property 1, 2, 3 mentioned in Theorem~\ref{thm:general}. Then, $$\|R^{-1}\|_2\leq\frac{1}{ \so^k\cdot \sqrt{1-dist^2(U_t, \Uo)}-\|F\|_2}.$$ 
\end{lemma}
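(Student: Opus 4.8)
The plan is to reduce the bound on $\twonorm{R^{-1}}$ to a lower bound on the smallest singular value of the least-squares iterate $\Vh$. Recall that in step~7 of Algorithm~\ref{alg:altmin} we write $\Vh = V R$ with $V$ having orthonormal columns and $R$ the $k\times k$ triangular factor; since $V$ is an isometry on its column space, $\sigma_{\min}(R)=\sigma_{\min}(\Vh)$, and therefore $\twonorm{R^{-1}} = 1/\sigma_{\min}(R) = 1/\sigma_{\min}(\Vh)$. So it suffices to show $\sigma_{\min}(\Vh)\ge \so^k\sqrt{1-\dist^2(U_t,\Uo)}-\twonorm{F}$.

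First I would invoke the closed form \eqref{eq:vh} of the iterate, $\Vh = \tp{\Wo}U - F$, and apply Weyl's inequality for singular values, $\sigma_{\min}(A-E)\ge \sigma_{\min}(A)-\twonorm{E}$, with $A=\tp{\Wo}U$ and $E=F$. This leaves the task of lower bounding $\sigma_{\min}(\tp{\Wo}U)$. Substituting the SVD $\Wo=\Uo\So\Vot$ gives $\tp{\Wo}U = \Vo\So\Uot U$; since $\Vo$ has orthonormal columns, left multiplication by $\Vo$ preserves singular values, so $\sigma_{\min}(\tp{\Wo}U)=\sigma_{\min}(\So\,\Uot U)$, and because $\So$ is square and invertible, $\sigma_{\min}(\So\,\Uot U)\ge \sigma_{\min}(\So)\,\sigma_{\min}(\Uot U)=\so^k\,\sigma_{\min}(\Uot U)$.

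The final ingredient is the principal-angle identity $\sigma_{\min}(\Uot U)=\sqrt{1-\dist^2(U_t,\Uo)}$. Writing $U_\perp$ for an orthonormal basis of the orthogonal complement of $\mathrm{span}(U)$, one has $\Uot U\, U^T\Uo + \Uot U_\perp\, U_\perp^T\Uo = \Uot\Uo = I_k$, so the two positive semidefinite matrices on the left commute and their eigenvalues sum to $1$ in pairs; hence $\sigma_{\min}(\Uot U)^2 = 1-\twonorm{U_\perp^T\Uo}^2 = 1-\dist^2(U_t,\Uo)$. Chaining the three inequalities above yields the claim.

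Honestly there is no serious obstacle here: all steps are elementary facts about singular values, and the only points needing a little care are the submultiplicativity $\sigma_{\min}(\So\,\Uot U)\ge\sigma_{\min}(\So)\sigma_{\min}(\Uot U)$ (which relies on $\So$ being square and nonsingular) and the principal-angle identity. For the bound to be meaningful the denominator must be positive, and this is ensured downstream by combining Lemma~\ref{lem:F}, which gives $\twonorm{F}\le \tfrac{\so^k}{100}\dist(U_t,\Uo)$, with the inductive control $\dist(U_t,\Uo)\le\dist(U_0,\Uo)\le \tfrac1{100}$ established in Appendix~\ref{app:general}. Together with Lemma~\ref{lem:F}, this lemma is precisely what turns $\dist(\Vo,V_{h+1})\le\twonorm{F}\,\twonorm{R^{-1}}$ into the per-step contraction that drives the main theorem.
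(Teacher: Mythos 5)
Your proof is correct and follows essentially the same route as the paper's (which itself reproduces Lemma 4.7 of \cite{JainNS13}): lower bound $\sigma_{\min}(R)=\sigma_{\min}(\Vh)$ by writing $\Vh=\Vo\So\Uot U-F$, peel off $\|F\|_2$ (the paper via the variational characterization and triangle inequality, you via Weyl's singular-value perturbation bound, which is the same thing), and then use $\sigma_{\min}(\So\,\Uot U)\geq \so^k\sigma_{\min}(\Uot U)$ together with the principal-angle identity $\sigma_{\min}(\Uot U)=\sqrt{1-\dist^2(U_t,\Uo)}$, which you additionally justify in detail. No gaps.
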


\begin{proof}[Proof of Lemma~\ref{lem:F}]
Recall that $\mvec(F)=B^{-1}(BD-C)S\cdot \mvec(\Vo)$. Hence, 
\begin{equation}\label{eq:f1}\|F\|_2\leq \|F\|_F\leq \|B^{-1}\|_2\|BD-C\|_2\|S\|_2\|\mvec(\Vo)\|_2=\so^1\sqrt{k}\|B^{-1}\|_2\|BD-C\|_2.\end{equation}
Now, we first bound $\|B^{-1}\|_2=1/(\sigma_{\text{min}}(B))$. 
Also, let $Z=[\z_1 \z_2 \dots \z_k]$ and let $\z=\mvec(Z)$. Then, 
\begin{align}
  \sigma_{\text{min}}(B)&=\min_{\z, \|\z\|_2=1}\z^TB\z=\min_{\z, \|\z\|_2=1}\sum_{1\leq p\leq k, 1\leq q\leq k}\z_p^TB_{pq}\z_q\nonumber\\&=\min_{\z, \|\z\|_2=1}\sum_{p}\z_p^TB_{pp}\z_p+\sum_{pq, p\neq q}\z_p^TB_{pq}\z_q. \label{eq:bp0}
\end{align}
Recall that, $B_{pp}=\frac{1}{m}\sum_{i=1}^m \yi\yit (\xit\u_p)^2$ and $\u_p$ is independent of $\xi, \yi, \forall i$. Hence, using Property 2 given in Theorem~\ref{thm:general}, we get: 
\begin{equation}
  \label{eq:bp1}
  \sigma_{\text{min}}(B_{pp})\geq 1- \delta,
\end{equation}
where, $$\delta=\frac{1}{k^{3/2}\cdot \beta\cdot 100},$$ and $\beta=\so^1/\so^k$ is the condition number of $\Wo$. 

Similarly, using Property (3), we get: 
\begin{equation}
  \label{eq:bp2}
  \|B_{pq}\|_2\leq \delta. 
\end{equation}
Hence, using \eqref{eq:bp0}, \eqref{eq:bp1}, \eqref{eq:bp2}, we get:
\begin{equation}
  \label{eq:bp3}
  \sigma_{\text{min}}(B)\geq \min_{\z, \|\z\|_2=1}(1-\delta)\sum_{p}\|\z_p\|_2^2-\delta\sum_{pq, p\neq q}\|\z_p\|_2\|\z_q\|_2=\min_{\z, \|\z\|_2=1}1-\delta\sum_{pq}\|\z_p\|_2\|\z_q\|_2\geq 1-k\delta. 
\end{equation}

Now, consider $BD-C$: 
\begin{align}
  \|BD-C\|_2&= \max_{\z, \|\z\|_2=1}|\z^T(BD-C)\z|,\nonumber\\
&=\max_{\z, \|\z\|_2=1}\left|\sum_{1\leq p\leq k, 1\leq q\leq k}\z_p^T\yi\yit\z_q\xit\left(\sum_{1\leq \ell\leq k}\ip{\u_\ell}{\u_{*q}}\u_p\u_\ell^T-\u_{p}\u_{*q}^T\right)\x_i\right|,\nonumber\\
&=\max_{\z, \|\z\|_2=1}\left|\sum_{1\leq p\leq k, 1\leq q\leq k}\z_p^T\yi\yit\z_q\xit\u_p\u_{*q}^T(UU^T-I)\x_i\right|,\nonumber\\
&\stackrel{\zeta_1}{\leq}\delta \max_{\z, \|\z\|_2=1} \sum_{1\leq p\leq k, 1\leq q\leq k} \|(UU^T-I)\u_{*q}\|_2\|\z_p\|_2\|\z_q\|_2\leq k\cdot \delta\cdot \dist(U, \Uo),\label{eq:bp4}
\end{align}
where $\zeta_1$ follows by observing that $\u_{*q}^T(UU^T-I) \u_p=0$ and then by applying Property (3) mentioned in Theorem~\ref{thm:general}. 

Lemma now follows by using \eqref{eq:bp4} along with  \eqref{eq:f1} and \eqref{eq:bp3}. 
\end{proof}
\begin{proof}[Proof of Lemma~\ref{lem:R}]
The lemma is exactly the same as Lemma 4.7 of \cite{JainNS13}. We reproduce their proof here for completeness. 

Let $\sigma_{\text{min}}(R)$ be the smallest singular value of $R$, then: 
\begin{align}
  \sigma_{\text{min}}(R)&=\min_{\z, \|\z\|_2=1}\|R\z\|_2=\min_{z, \|z\|_2=1}\|VR\z\|_2=\min_{\z, \|\z\|_2=1}\|\Vo\So\Uot  U\z-F\z\|_2, \nonumber\\
&\geq \min_{\z, \|\z\|_2=1}\|\Vo\So\Uot  U\z\|_2-\|F\z\|_2\geq \so^k\sigma_{\text{min}}(U^T\Uo)-\|F\|_2,\nonumber\\
&\geq \so^k\sqrt{1-\twonorm{U^T\Uo^\perp}^2}-\|F\|_2=\so^k\sqrt{1-\dist(\Uo, U)^2}-\|F\|_2. 
\end{align}
Lemma now follows by using the above inequality along with the fact that $\|R^{-1}\|_2\leq 1/\sigma_{\text{min}}(R)$.
\end{proof}
\end{document}